\titlespacing*{\section}{0pt}{2.2ex plus .6ex minus .2ex}{1.2ex plus .3ex}
\titlespacing*{\subsection}{0pt}{1.8ex plus .5ex minus .2ex}{0.9ex plus .2ex}
\titlespacing*{\subsubsection}{0pt}{1.4ex plus .4ex minus .1ex}{0.7ex plus .2ex}
\renewcommand{\maketitle}{%
  \begin{center}
    \hrule height 0.8pt \vspace{0.8em}
    {\LARGE\bfseries \@title \par}
    \vspace{0.8em}
    {\large \@author \par}
    \vspace{0.6em}
    \hrule height 0.8pt
  \end{center}
  \vspace{1.2em}
}
\newtheorem{theorem}{Theorem}%  meant for continuous numbers
\newtheorem{proposition}[theorem]{Proposition}% 
\newtheorem{example}{Example}%
\newtheorem{remark}{Remark}%
\newtheorem{lemma}{Lemma}
\newtheorem{assumption}{Assumption}
\newtheorem{corollary}{Corollary}
\newtheorem{definition}{Definition}%
\title{Learning under Distributional Drift: Prequential
Reproducibility as an Intrinsic Statistical Resource}
\author[1]{Sofiya Zaichyk}
\affil[1]{Innovative Defense Technologies (IDT)}
\date{}
\begin{document}
\maketitle
\thispagestyle{plain}

\begin{abstract}
Statistical learning under distributional drift remains poorly characterized, especially in closed-loop settings where learning alters the data-generating law. We introduce an intrinsic drift budget $C_T$ that quantifies cumulative information-geometric motion of the data distribution along the realized learner--environment trajectory, measured in Fisher--Rao distance. The budget separates exogenous environmental change from policy-sensitive feedback induced by the learner's actions. This gives a rate-based characterization of prequential reproducibility: when performance on the realized stream is used to predict one-step-ahead performance under the next distribution, the drift contribution enters through the average motion rate $C_T/T$, not through cumulative drift alone. We prove a drift--feedback bound of order \(T^{-1/2}+C_T/T\), up to controlled second-order remainder terms, and establish a matching sharpness lower bound for the same prequential reproducibility gap on a canonical regular subclass. Thus the dependence on the average Fisher--Rao motion rate is tight up to constants: $C_T/T$ is sufficient for upper control and unavoidable on regular hard subclasses. We further prove an information-theoretic indistinguishability result showing that order-\(C/T\) effects on the one-step-ahead target need not be identifiable from the realized performance stream alone. Finally, we show that fixed monitoring channels induce contracted observable Fisher motion, and experiments, including a misspecified real-data feedback setting, indicate that appropriately chosen channels can retain risk-relevant drift signal when the intrinsic data-generating law is unavailable. The resulting theory treats exogenous drift, adaptive data analysis, and performative feedback as different sources of Fisher--Rao motion along the same learner--environment trajectory.
\end{abstract}
\vspace{0.5\baselineskip}
\noindent\textbf{Keywords:} information geometry; statistical learning theory; non-stationary learning; generalization bounds; distribution shift; closed-loop learning
\vspace{0.5\baselineskip}

\section{Introduction}\label{sec:intro}

Modern learning systems often operate in \emph{self-modifying environments}, where learning alters the very distribution it learns from. A recommender policy reshapes user preferences as it updates \citep{Perdomo2020}; adaptive experiments change the distribution from which subsequent data are drawn \citep{li2024}; and reinforcement-driven agents modify the state transitions that determine future feedback \citep{Sutton1998,Dar2003}. In all such cases, the learner is not a passive observer of a stationary process but an active participant in an evolving distributional flow. The resulting feedback coupling breaks stationarity and, more importantly, the iid sampling premise underlying classical generalization theory~\citep{Vapnik1998}.

\subsection{Motivation}

Given samples $\{(x_i,y_i)\}_{i=1}^T$ drawn iid\ from a fixed distribution $p(x,y)$, population and empirical risks coincide asymptotically, with convergence rate $O(T^{-1/2})$~\citep{Vapnik1998}. Once the data-generating process itself changes, these guarantees collapse, and the population quantity being estimated becomes path-dependent. In such settings, the limiting factor is not only sample size, but the average \emph{one-step} change in the data-generating law.

As a running example, consider an adaptive recommender with policy $\pi_t$ that selects content while updating its model from observed responses. Each recommendation depends on past data yet also shapes future data, since user preferences evolve with exposure. Let the environment be parameterized by $\theta_t$, governing $p_{\theta_t}(x,y)$, and evolve in closed loop according to $\theta_{t+1}=F(\theta_t,u_t,\eta_t)$, where $u_t$ is some action chosen from a learned policy and $\eta_t$ denotes exogenous influences. Even when $F$ is smooth and deterministic conditional on $(u_t,\eta_t)$, its dependence on $u_t$ couples learning to the data-generating law. This relationship is visualized in Figure~\ref{fig:feedbackloop}. The induced distributions $p_{\theta_1},p_{\theta_2},\ldots$ trace a distributional trajectory. To reason about how rapidly this trajectory changes, we need a notion of \emph{local} statistical displacement between nearby $p_\theta$'s. The family $\{p_\theta\}$ forms a statistical manifold whose intrinsic Riemannian metric---the Fisher information---measures how sensitively the distribution changes under infinitesimal perturbations of $\theta$. Thus the learner--environment interaction induces geometric motion in Fisher--Rao distance, and the realized sequence $\{(x_t,y_t)\}$ need not be stepwise self-consistent. Even when the learner's update rule is fixed, the coupled dynamics can move the underlying law $p_{\theta_t}$ enough that performance measured at time $t$ is not predictive of performance under $p_{\theta_{t+1}}$. When Fisher--Rao motion accumulates rapidly in directions that affect the
deployed predictors' risks, successive one-step population targets can become
poorly aligned with realized performance, degrading prequential evaluation---i.e., sequential predictive evaluation along the realized trajectory, in the sense of the prequential framework of \citet{Dawid1999}.

\begin{figure}[t]
\centering
\includegraphics[width=1.0\linewidth]{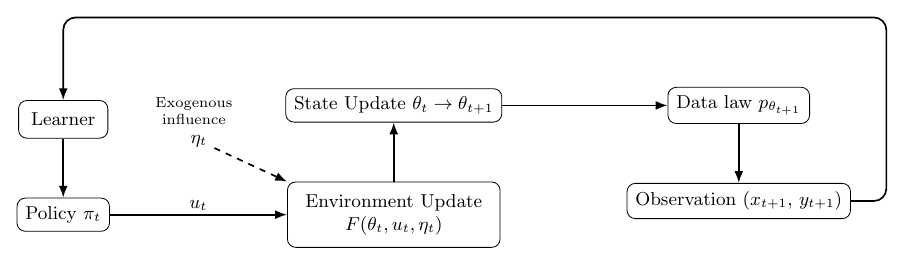}
\caption{
The learner’s policy $\pi_t$ and exogenous influence $\eta_t$ act on the environment, evolving $\theta_t$ to $\theta_{t+1}$ under $F(\theta_t,u_t,\eta_t)$. This new state defines the next data distribution $p_{\theta_{t+1}}$, closing the feedback loop. Exogenous factors $\eta_t$ perturb $\theta_t$ externally, while endogenous feedback arises from the learner’s own actions.}
\label{fig:feedbackloop}
\end{figure}

The same feedback-driven coupling arises in adaptive testing, model-in-the-loop simulation, and autonomous control. This raises a basic question: how rapidly can the learner--environment system move before classical guarantees break? To answer this, we develop a geometric analysis for quantifying distributional motion during learning. We then relate this motion to nonasymptotic generalization and prequential reproducibility under drift.

\subsection{Conceptual View}

We view the data-generating process as a trajectory $\{\theta_t\}$ on a statistical manifold $(\Theta,g_\theta)$ endowed with the Fisher--Rao metric \citep{AmariNagaoka2000}. Any smooth divergence admits a second-order expansion whose Hessian defines a local Riemannian metric \citep{Amari2016}. For the broad class of $f$-divergences, including KL, $\chi^2$, Hellinger, and $\alpha$-divergences, this induced metric coincides with the Fisher--Rao geometry. Fisher--Rao is therefore the natural information-geometric metric on a statistical model family \citep{cencov1982,AmariNagaoka2000}, capturing displacement intrinsic to the family rather than to a particular coordinate system. In particular, Fisher--Rao is the unique Riemannian metric on parametric families that is invariant under smooth reparameterizations \citep{cencov1982}. Because drift unfolds along a realized trajectory, we need a cumulative measure of motion that composes across time. Once a local geometry $g_\theta$ is fixed, the natural choice is arc length, $\int \|\dot \theta(t)\|_{g_{\theta(t)}}\,dt,$
which is additive under concatenation of path segments. Taking $g_\theta$ to be Fisher--Rao thus yields an intrinsic scale for describing how small distributional perturbations alter the statistical environment of a learner.

This finite-dimensional notation should be read as the regular theorem setting, not as a claim that real data-generating processes are intrinsically low-dimensional. More generally, a closed-loop learning process induces a trajectory of conditional laws
\[
P_t=\mathcal L(Z_t\mid \mathcal F_{t-1}), \qquad Z_t=(X_t,Y_t),
\]
which may be implicit, high-dimensional, or nonparametric. Fisher--Rao geometry can be defined directly on regular dominated families of densities, for tangent perturbations with finite Fisher norm. We work here with a finite-dimensional statistical manifold to keep the drift--feedback decomposition and local comparison arguments explicit. The observable-channel construction in Section~\ref{sec:observable_footprints} later provides the corresponding Fisher geometry for pushed-forward laws available to a monitor.

In discrete time, the intrinsic Fisher--Rao path length of the realized trajectory is
\begin{equation}\label{eq:intro_AT}
\mathcal{A}_T \;:=\; \sum_{t=1}^{T} d_F(\theta_{t+1},\theta_t),
\end{equation}
where $d_F$ is the Fisher--Rao geodesic distance. This is conceptually distinct from aggregating pairwise TV, Wasserstein, or KL discrepancies. Those are pairwise divergences not an intrinsic line element whose path integral yields a coordinate-free notion of cumulative motion. Thus $\mathcal A_T$ measures the statistical length of the realized path, rather than a collection of pairwise discrepancies in an externally chosen divergence.

\paragraph{Cumulative vs.\ per-step motion.}
The path length \(\mathcal A_T\) is cumulative and may grow even when the trajectory stays in a bounded region. Prequential reproducibility is assessed step-to-step, so the relevant distinction is between total path length and the average rate at which adjacent laws change.

\begin{definition}[Prequential reproducibility and intrinsic drift budget]
\label{def:reproducibility}
Prequential reproducibility is an \emph{evaluation} property of learning under an evolving data-generating law: the extent to which performance measured on the realized data stream remains predictive of performance under the \emph{next} data-generating distribution along the trajectory. In Section~\ref{sec:setup} we formalize this as a gap between empirical evaluation and a one-step-ahead (prequential) population target.

We summarize cumulative distributional motion by a tractable \emph{intrinsic drift budget}
\begin{equation}\label{eq:intro_CT}
C_T \;:=\; \sum_{t=1}^{T}\bigl(d_t+\alpha\,\kappa_t^{(\mathcal M)}\bigr),
\end{equation}
where $d_t$ captures exogenous drift and $\kappa_t^{(\mathcal M)}$ captures the leading-order policy-sensitive contribution in Fisher geometry. In Sections~\ref{sec:setup}--\ref{sec:drift_feedback} we prove that $C_T$ controls the intrinsic Fisher--Rao path length $\mathcal A_T$ up to a controlled second-order remainder. Accordingly, the relevant rate associated with this budget is $C_T/T$, the average Fisher--Rao motion per step.
\end{definition}

The distinction between $\mathcal A_T$ and $C_T$ is that $\mathcal A_T$ records the realized intrinsic motion, while $C_T$ decomposes the sources of that motion. Exogenous change contributes through $d_t$, and policy-sensitive feedback contributes through $\kappa_t^{(\mathcal M)}$, with both measured in the same local Fisher geometry. Figure~\ref{fig:two-panel-drift} illustrates this local decomposition and its accumulation into a global path length.

\begin{figure}[t]
\centering

\begin{subfigure}[t]{0.49\textwidth}
    \centering
    \includegraphics[width=\linewidth]{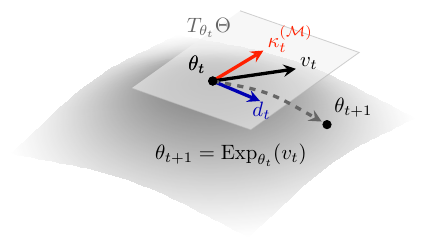}
    \caption{Local drift decomposition in a tangent space.}
    \label{fig:local}
\end{subfigure}
\hfill
\begin{subfigure}[t]{0.49\textwidth}
    \centering
    \includegraphics[width=\linewidth]{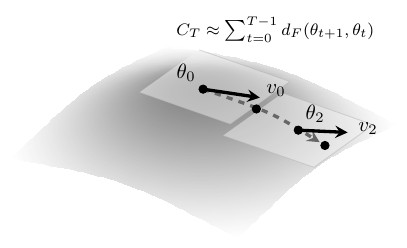}
    \caption{Global Fisher--Rao path length $\mathcal{A}_T$ and proxy budget $C_T$.}
    \label{fig:global}
\end{subfigure}

\caption{
Geometric intuition for learning under drift.
(a) At each step the environment parameter $\theta_t$ lies on the statistical manifold $(\Theta,g_\theta)$. Local motion decomposes into an \emph{exogenous} component, captured by $d_t$, and a \emph{policy-sensitive} component, captured by $\kappa_t^{(\mathcal M)}$, defined using local Fisher geometry around the baseline next state.
(b) Iterating these local movements generates a Fisher--Rao trajectory
$\theta_1 \to \theta_2 \to \cdots \to \theta_{T+1}$ with intrinsic length
$\mathcal{A}_T=\sum_{t=1}^{T}d_F(\theta_{t+1},\theta_t)$. The decomposed budget
$C_T=\sum_{t=1}^{T}(d_t+\alpha\kappa_t^{(\mathcal M)})$ upper-bounds this length, and the normalized rate $C_T/T$ is the quantity entering the prequential reproducibility bounds.
}
\label{fig:two-panel-drift}
\end{figure}

\paragraph{Coupling and effective drift.}
Because the target is one-step-ahead, the relevant quantity is the average rate of adjacent-law motion, $C_T/T$, rather than the cumulative budget alone. In feedback-driven settings this rate is not merely a property of an exogenous drift path. Policies that stably summarize the environment can reduce effective one-step change, whereas policies that chase transient fluctuations can sustain large per-step motion. The point is not that total motion must remain small, but that prequential reproducibility becomes drift-dominated when this one-step motion rate induces nonnegligible predictor-fixed risk variation over the evaluation horizon.

Our main results relate the drift-induced component of prequential behavior to the average motion rate $C_T/T$, and we prove a matching sharpness lower bound for the same prequential reproducibility gap on a canonical regular subclass, establishing a prequential reproducibility speed limit of order \(\Theta(T^{-1/2}+C/T)\).

\subsection{Relation to Prior Frameworks}

Several established frameworks capture pieces of the broader picture studied here. While each captures an important aspect of instability, the difficulty is that these pieces are usually treated in isolation. Our work provides a geometric viewpoint that connects them.

\emph{Stationary learning}~\citep{Vapnik1998,shalev2014} assumes fixed $p(x,y)$ and produces the familiar $O(T^{-1/2})$ convergence guarantees derived from concentration under iid\ sampling. 
\emph{Nonstationary stochastic optimization}~\citep{Besbes2015} introduces a variation budget that measures how much the target distribution can change exogenously over time, typically through a cumulative constraint such as $\sum_{t=1}^{T-1} \mathsf{d}(p_{\theta_{t+1}},p_{\theta_t}) \le V_T$ for a discrepancy $\mathsf{d}$ (often total variation or Wasserstein). Such budgets constrain external motion but they do not model endogenous coupling to the learner’s policy, nor do they encode how drift is distributed over time. In contrast, our bounds control prequential reproducibility through an \emph{average drift rate} $C_T/T$ derived from intrinsic Fisher--Rao motion along the realized trajectory, and in feedback-driven settings this rate can itself depend on the learner’s actions. Related approaches bound performance under drifting distributions using stability or Rademacher techniques~\citep{Blanchard2021,kuznetsov16,Mohri2012} but likewise do not explicitly model feedback between learner and environment. PAC--Bayes analyses have also been developed for distributional change, notably in the domain adaptation setting of \citet{Germain2016}. However, those results bound a single \emph{source--target shift}, whereas the present work studies \emph{path-dependent}, cumulative drift induced by sequential learner--environment coupling.

\emph{Performative prediction}~\citep{Perdomo2020} explicitly introduces feedback between model and environment, focusing on equilibrium conditions where that feedback vanishes ($p_{\theta_{t+1}}\approx p_{\theta_t}$). Follow-up work has refined aspects of this coupling~\citep{izzo21,jagadeesan22}, but continues to analyze convergence to fixed points rather than the explicit accumulation of temporal drift.
\emph{Adaptive data analysis}~\citep{Dwork2015,rogers20,russo2016,bassily2016} addresses feedback in the statistical query setting and bounds overfitting through mutual-information or stability arguments, yet does so within a static population that does not itself evolve.

These approaches define distinct ways to measure instability: sample variance, variation budgets, equilibrium deviation, and information flow. The present work connects them by treating drift as motion on a statistical manifold. The total motion decomposes into an exogenous component and a policy-sensitive component,
\begin{equation}\label{eq:intro_drift_terms}
\underbrace{d_t}_{\text{exogenous drift}}
\quad+\quad
\underbrace{\kappa_t^{(\mathcal M)}}_{\text{policy-sensitive drift}},
\end{equation}
and the resulting budget \eqref{eq:intro_CT} recovers classical regimes as limiting cases distinguished by which source of motion dominates. In the purely exogenous case $\kappa_t^{(\mathcal M)}\equiv 0$, the normalized budget $C_T/T$ plays the same operational role as a variation budget \emph{rate}, but is tied to Fisher--Rao motion and to the induced one-step risk shifts rather than imposed in an external discrepancy. A summary of these relationships is provided in Table~\ref{tab:related-regimes}.

\begin{table}[h]
\centering
\small
\caption{Learning regimes recovered as limits of the drift--feedback bound. Each corresponds to nullifying either the exogenous drift $d_t$ or the policy-sensitivity term $\kappa_t^{(\mathcal M)}$.}
\label{tab:related-regimes}
\vspace{6pt}
\begin{tabular}{p{0.21\textwidth} p{0.20\textwidth} p{0.26\textwidth} p{0.25\textwidth}}
\toprule
\textbf{Learning regime} & \textbf{Active terms} &
\textbf{Representative work} & \textbf{Notes} \\
\midrule
Stationary \emph{iid} &
$\sqrt{T}$ term only &
\cite{Vapnik1998} &
No drift or feedback; fixed $p(x,y)$. \\[3pt]

Exogenous drift &
$d_t$ term &
\cite{Besbes2015} &
Distribution changes externally over time. \\[3pt]

Performative equilibrium &
$d_t \!\to\! 0,\ \kappa_t^{(\mathcal M)} \!\to\! 0$ &
\cite{Perdomo2020} &
Coupled learner--environment system converging to a performative equilibrium. \\[3pt]

Adaptive data analysis &
$\kappa_t^{(\mathcal M)}$ term &
\cite{Dwork2015} &
Feedback/adaptivity-driven instability. \\
\bottomrule
\end{tabular}
\end{table}

A broader online-optimization literature studies nonstationarity via \emph{tracking} objectives and bounds performance through dynamic regret, which scales with the path length of a time-varying comparator sequence and reduces to static regret in the stationary limit \citep{Hall2013,Zhao2024}. These results quantify how well an algorithm can follow a moving target, but they are typically stated in terms of loss-sequence variation or comparator drift and do not address prequential reproducibility gaps induced by \emph{endogenous} distributional evolution. Our analysis is complementary. It identifies an intrinsic, distribution-level motion rate $C_T/T$ that controls step-to-step population risk shifts along the realized closed-loop trajectory.

Information-theoretic analyses of adaptive learning express related limits in terms of mutual-information accumulation between data and model parameters~\citep{russo2016,xu2017}. In statistical physics, learning has been interpreted as a thermodynamic process in which information gain is constrained by entropy production \citep{Still2012,Goldt2017,ortega2013thermodynamics}. All of these perspectives describe limited information flow, but they do so with external metrics or equilibrium arguments rather than an intrinsic geometric surrogate tied to Fisher motion.

Finally, the geometric ingredients we use---Fisher information as the canonical invariant metric and the local quadratic expansion of KL---are classical in information geometry \citep{cencov1982,AmariNagaoka2000,Amari2016}. A related line of work uses this geometry primarily to motivate \emph{algorithms} (e.g., natural-gradient or Riemannian preconditioning) and adaptive learning dynamics, including in nonstationary or changing environments \citep{Amari2019,Murata2002}. Our focus is different. Rather than parameter space, we use it to quantify \emph{intrinsic motion of the data-generating law} and to derive nonasymptotic prequential reproducibility limits governed by the drift \emph{rate} $C_T/T$, including a matching lower bound showing sharpness of the drift-rate
dependence for the prequential reproducibility gap itself. Section~\ref{sec:observable_footprints} further adds an observability principle which isolates what can and cannot be inferred about intrinsic drift from partial observations.

\subsection{Contributions and Significance}

The drift budget \(C_T\) decomposes Fisher--Rao motion in a closed-loop learning process into exogenous and policy-sensitive contributions. Its normalized rate \(C_T/T\), rather than cumulative motion alone, is the quantity that enters the prequential bound. This lets exogenous drift and policy-sensitive feedback be treated in the same units while keeping their contributions separate.

The main contributions of this paper are:
\begin{enumerate}
    \item We develop a closed-loop model for learning under endogenous drift, modeling the data-generating process as a trajectory on a statistical manifold and formalizing prequential reproducibility as a one-step-ahead gap along the realized learner--environment path.

    \item We define drift primitives $(d_t,\kappa_t^{(\mathcal M)})$ and a decomposed Fisher--Rao drift budget $C_T$ that separates exogenous motion from policy-sensitive motion within Fisher geometry, with operational control governed by the average rate $C_T/T$.

    \item We establish finite-sample bounds that separate on-trajectory sampling concentration from drift-induced risk variation, yielding explicit exogenous and policy-sensitive contributions and the scaling
    \[
    \mathbb E\,\Delta_T^{\mathrm{rep}} \lesssim T^{-1/2}+C_T/T.
    \]

    \item We prove two separate lower bounds on canonical regular drift--feedback subclasses. A same-object sharpness lower bound establishes that $\Theta(T^{-1/2}+C/T)$ is the sharp rate for $\mathbb E\Delta_T^{\mathrm{rep}}$ itself, while a Fano construction shows that order-$C/T$ effects on the one-step-ahead target can remain statistically hard to distinguish from the realized prequential evidence.

    \item We relate the intrinsic theory to observable monitoring channels. Fixed channels induce contracted Fisher motion on their output laws, giving a channel-dependent view of drift. Experiments in Gaussian, nonlinear teacher--learner, and misspecified real-data feedback settings show when this projected motion remains aligned with one-step risk shifts.
\end{enumerate}

The remainder of the paper is organized as follows.
Section~\ref{sec:preliminaries} introduces preliminary tools used in the analysis;
Section~\ref{sec:setup} formalizes the endogenously drifting process and its metric structure;
Section~\ref{sec:drift_feedback} defines the drift measures and intrinsic drift budget;
Section~\ref{sec:theory} presents the main theorems and lower bounds;
Section~\ref{sec:observable_footprints} develops observable Fisher--Rao drift rates under monitoring channels;
Section~\ref{sec:experiments} reports empirical validation; and Section~\ref{sec:discussion} discusses implications and extensions.

\section{Preliminaries and Lemmas}
\label{sec:preliminaries}

This section fixes notation and local assumptions used later. The main prequential gap and drift decomposition are introduced in Section~\ref{sec:setup}; the present section only supplies the geometric and probabilistic background needed for the bounds. We define the Fisher--Rao metric, state the filtration convention under which the loss residuals are martingale differences, impose local regularity along the realized trajectory, and record the concentration and path-length facts used in Sections~\ref{sec:drift_feedback}--\ref{sec:theory}. Full proofs of nonstandard or technically involved statements are deferred to Appendix~\ref{app:prelim_proofs}.

\subsection{Measure--metric foundations}

Let \((\mathcal X,\mathcal Y)\) denote the input--output space with the product \(\sigma\)--algebra. For the main theorem we work in a regular local statistical model \(\{p_\theta(x,y):\theta\in\Theta\}\), where \(\Theta\) is a smooth finite-dimensional parameter manifold. The analysis is local along the realized learner--environment trajectory; the assumptions below require only that the visited states lie in a compact regularity region \(K\subset\Theta\). Expectations under $p_\theta$ are written $\mathbb E_\theta[\cdot]$.

Throughout, expressions of the form $\theta+\Delta\theta$ are understood in a fixed local coordinate chart. When we require an intrinsic tangent displacement between two nearby points $\theta,\theta'\in\Theta$, we write
\[
\log_\theta(\theta') \in T_\theta\Theta
\]
for the Riemannian logarithm map (defined on the normal neighborhood of $\theta$), so that $\operatorname{Exp}_\theta(\log_\theta(\theta'))=\theta'$.

\begin{definition}[Fisher--Rao metric and distance]
\label{def:fisher-rao}
For each $\theta\in\Theta$, let $T_\theta\Theta$ denote the tangent space at $\theta$. The \emph{Fisher--Rao metric} is the Riemannian metric $g_\theta:T_\theta\Theta\times T_\theta\Theta\to\mathbb R$ defined by
\[
  g_\theta(v,w)
  ~=~
  \mathbb{E}_\theta\!\left[
    \langle\nabla_\theta\log p_\theta(x,y),v\rangle\,
    \langle\nabla_\theta\log p_\theta(x,y),w\rangle
  \right],
  \qquad v,w\in T_\theta\Theta.
\]
In local coordinates $\theta=(\theta^1,\dots,\theta^d)$, with $\partial_i=\partial/\partial\theta^i$, this gives the Fisher information matrix
\[
  g_{ij}(\theta)
  ~=~
  \mathbb{E}_\theta\!\left[
    \partial_i\log p_\theta(x,y)\,\partial_j\log p_\theta(x,y)
  \right].
\]
For $v\in T_\theta\Theta$, the induced norm is
\[
  \|v\|_{g_\theta}
  ~=~
  \sqrt{g_\theta(v,v)}.
\]

The associated \emph{Fisher--Rao geodesic distance} between $\theta,\theta'\in\Theta$ is
\[
  d_F(\theta,\theta')
  ~=~
  \inf_{\gamma\in\Gamma(\theta,\theta')}
  \int_0^1 \|\dot\gamma(s)\|_{g_{\gamma(s)}}\,ds,
\]
where $\Gamma(\theta,\theta')$ is the set of piecewise--$C^1$ curves $\gamma:[0,1]\to\Theta$ with $\gamma(0)=\theta$ and $\gamma(1)=\theta'$. See \citet{rao1945} and \citet{AmariNagaoka2000}.
\end{definition}

\subsection{Filtration and sampling semantics}

We work with the natural filtration generated by past observations and actions,
\[
\mathcal F_t=\sigma(x_1,y_1,u_1,\ldots,x_t,y_t,u_t).
\]
Throughout, the learner may update its predictor and policy using the available history. Concretely, we assume:
\begin{enumerate}
\item the environment parameter $\theta_t$ is $\mathcal F_{t-1}$-measurable;
\item the predictor $f_t$ and control $u_t$ are $\mathcal F_{t-1}$-measurable (or are drawn from conditional distributions given $\mathcal F_{t-1}$); and
\item conditional on $\mathcal F_{t-1}$, the next observation satisfies $(x_t,y_t)\mid \mathcal F_{t-1}\sim p_{\theta_t}$.
\end{enumerate}
Under these conditions,
\[
\mathbb E\!\left[\ell\!\big(f_t(x_t),y_t\big)\,\middle|\,\mathcal F_{t-1}\right]
=
\mathbb E_{(x,y)\sim p_{\theta_t}}\!\left[\ell\!\big(f_t(x),y\big)\right]
=:R(\theta_t,f_t),
\]
so the centered loss
\[
Z_t:=\ell\!\big(f_t(x_t),y_t\big)-R(\theta_t,f_t)
\]
is a martingale difference sequence with respect to $(\mathcal F_t)$. 

\subsection{Regularity and bounded geometry}
\label{subsec:regularity}

We impose standing assumptions ensuring that Fisher geometry is well behaved on the region visited by the realized trajectory.

\begin{assumption}[Local statistical regularity]
\label{assump:regularity}
The realized trajectory $\{\theta_t\}_{t=1}^{T+1}$ remains inside a compact set $K\subset\Theta$ on which:
\begin{enumerate}
\item \emph{Uniform Fisher regularity.} The Fisher information matrix $g(\theta)$ is finite, positive definite, and smooth on $K$.
\item \emph{Regular likelihood calculus.} The map $\theta\mapsto \log p_\theta(x,y)$ is twice continuously differentiable on a neighborhood of $K$, and differentiation and integration interchange uniformly on $K$.
\item \emph{Bounded geometry.} There exists an injectivity radius
$r_{\mathrm{inj}}>0$ such that for all $\theta\in K$,
$\operatorname{Exp}_\theta$ is a diffeomorphism on the $g_\theta$-ball of radius
$r_{\mathrm{inj}}$, and the sectional curvature on $K$ is uniformly bounded.
\item \emph{Sub-Gaussian sampling noise.} The martingale differences $Z_t$ defined above satisfy the conditional sub-Gaussian mgf bound
\[
\mathbb E\!\left[\exp(\lambda Z_t)\mid \mathcal F_{t-1}\right]
\le \exp(\tfrac12 \sigma^2 \lambda^2),
\qquad \text{for all }\lambda\in\mathbb R.
\]
\end{enumerate}
\end{assumption}

\begin{assumption}[Local dynamics regularity]
\label{assump:environment}
\mbox{}
\begin{enumerate}
\item \emph{Finite-energy actions.} The learner's control inputs satisfy
\[
  \mathbb{E}\|u_t\|^2 \le B,
  \qquad t=1,\dots,T.
\]

\item \emph{Local control differentiability of the environment map (chart form).}
Fix a local coordinate chart on $K$.
For each $\theta\in K$ and each realized $\eta_t$, the environment transition $u\mapsto F(\theta,u,\eta_t)$ is differentiable at $u=0$ and admits the expansion
\[
F(\theta_t,u_t,\eta_t)
=
F(\theta_t,0,\eta_t)
+ J_uF(\theta_t,0,\eta_t)\,u_t + r_t,
\]
where $r_t$ is a second-order chart remainder. Moreover, there exists $c>0$ uniform over $\theta_t\in K$ and admissible $u_t$ such that
\[
\|r_t\| \le c\,\|u_t\|^2
\]
in the chart Euclidean norm. By norm equivalence on compact $K$, we will always measure this remainder at the baseline next state, so that (possibly with a different constant)
\[
\|r_t\|_{g_{F(\theta_t,0,\eta_t)}} \;\lesssim\; \|u_t\|^2.
\]

\item \emph{Trajectory confinement.} The update $\theta_{t+1}=F(\theta_t,u_t,\eta_t)$
keeps $\theta_{t+1}\in K$ for all $t$.
\end{enumerate}
\end{assumption}

\begin{remark}[intrinsic interpretation of the remainder.]
When steps remain inside a normal neighborhood, the chart expansion in Assumption~\ref{assump:environment}(2) may be viewed as an intermediate representation of the intrinsic displacement from $F(\theta_t,0,\eta_t)$ to $F(\theta_t,u_t,\eta_t)$; conversions to Fisher--Rao distance are handled later using local bounded-geometry comparisons along the realized trajectory.
\end{remark}

\begin{lemma}[Uniform local KL quadratic bound on $K$]
\label{lem:kl_quadratic}
Under Assumption~\ref{assump:regularity}(1)--(3), there exist constants
$\rho\le r_{\mathrm{inj}}$ and $C_{\mathrm{KL}}<\infty$ such that for all
$\theta\in K$ and all $\Delta\theta\in T_\theta\Theta$ with
$\|\Delta\theta\|_{g_\theta}\le \rho$,
\[
D_{\mathrm{KL}}(p_{\operatorname{Exp}_\theta(\Delta\theta)}\,\Vert\,p_\theta)
\;\le\;
C_{\mathrm{KL}}\,\|\Delta\theta\|_{g_\theta}^2.
\]
\end{lemma}

\begin{lemma}[Local geodesic equivalence]
\label{lem:local_equivalence}
Let $(\Theta,g)$ be a statistical manifold equipped with the Fisher metric, and let $\theta\in K$. If \(v\in T_\theta\Theta\) satisfies \(\|v\|_{g_\theta}<r_{\mathrm{inj}}\), then the curve
\[
\gamma_v:[0,1]\to\Theta,\qquad
\gamma_v(s)=\operatorname{Exp}_\theta(sv),
\]
is the radial geodesic from \(\theta\) to \(\operatorname{Exp}_\theta(v)\). It is minimizing, and
\begin{equation}
\label{eq:local_equiv}
d_F\!\bigl(\theta,\operatorname{Exp}_\theta(v)\bigr)=\|v\|_{g_\theta}.
\end{equation}
\end{lemma}

\begin{proof}[Proof (see Appendix~\ref{app:proof_local_equivalence})]
Since $\|v\|_{g_\theta}\le r_{\mathrm{inj}}$, the radial geodesic is minimizing on $[0,1]$. Gauss's lemma implies its length equals $\|v\|_{g_\theta}$, yielding \eqref{eq:local_equiv}; see \citet[Prop.~6.10 and Cor. ~6.11]{Lee97}.
\end{proof}

\subsection{Concentration tools}
\label{subsec:concentration}

\begin{lemma}[Sub-Gaussian martingale deviation]
\label{lem:subgaussian_martingale}
Let $\{Z_t\}_{t=1}^T$ be a martingale difference sequence adapted to $\{\mathcal F_t\}$ such that
\[
\mathbb E\!\left[\exp(\lambda Z_t)\mid \mathcal F_{t-1}\right]
\le
\exp\!\left(\tfrac12 \sigma_t^2 \lambda^2\right)
\qquad \text{for all }\lambda\in\mathbb R.
\]
Set $S_T=\sum_{t=1}^T Z_t$ and $\mathcal{V}_T=\sum_{t=1}^T \sigma_t^2$.
Then for any $\eta>0$,
\[
\Pr\!\big(|S_T|\ge \eta\big)
\le
2\exp\!\left(-\frac{\eta^2}{2\mathcal{V}_T}\right).
\]
\end{lemma}

\begin{corollary}[Expected deviation]
\label{cor:subgaussian_expectation}
Under the conditions of Lemma~\ref{lem:subgaussian_martingale},
\[
\mathbb E\,|S_T|
\le
\sqrt{2\pi\,\mathcal{V}_T},
\qquad
\mathbb E\!\left|\frac{1}{T}\sum_{t=1}^T Z_t\right|
\le
\frac{\sqrt{2\pi\,\mathcal{V}_T}}{T}.
\]
In particular, if each $Z_t$ is $\sigma$--sub-Gaussian so that $\mathcal{V}_T\le \sigma^2T$,
then
\[
\mathbb E\!\left|\frac{1}{T}\sum_{t=1}^T Z_t\right|
\le
\frac{\sqrt{2\pi}\,\sigma}{\sqrt{T}}.
\]
\end{corollary}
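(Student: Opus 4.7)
The plan is to derive $\mathbb E|S_T|\le\sqrt{2\pi V_T}$ directly from the tail bound in Lemma~\ref{lem:subgaussian_martingale} by integrating the survival function, which is the standard route from concentration to a first-moment estimate for nonnegative random variables. No additional probabilistic structure is required beyond what Lemma~\ref{lem:subgaussian_martingale} already provides, so the argument is essentially a short computation packaged as a corollary.

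First, I would invoke the layer-cake identity
\[
\mathbb E\,|S_T|=\int_0^\infty \Pr\!\big(|S_T|\ge \eta\big)\,d\eta,
\]
valid for any nonnegative random variable, and substitute the sub-Gaussian martingale tail $\Pr(|S_T|\ge\eta)\le 2\exp(-\eta^2/(2V_T))$ from the previous lemma. With the change of variables $u=\eta/\sqrt{2V_T}$ the integral reduces to $2\sqrt{2V_T}\int_0^\infty e^{-u^2}\,du=\sqrt{2\pi V_T}$, using $\int_0^\infty e^{-u^2}\,du=\sqrt{\pi}/2$. This yields the first stated inequality.

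The averaged bound then follows immediately by linearity of expectation together with the identity $|T^{-1}\sum_t Z_t|=|S_T|/T$, giving $\mathbb E|S_T|/T\le \sqrt{2\pi V_T}/T$. For the uniform-variance specialization, I would substitute $V_T\le \sigma^2 T$ into the previous display and simplify $\sqrt{2\pi\sigma^2 T}/T=\sigma\sqrt{2\pi}/\sqrt{T}$. There is no substantive obstacle here; the corollary is simply the first-moment shadow of the tail inequality in Lemma~\ref{lem:subgaussian_martingale}, and the only care required is bookkeeping the Gaussian-integral constants so that the prefactor $\sqrt{2\pi}$ emerges cleanly rather than an incorrect $\sqrt{\pi}$ or $2$.
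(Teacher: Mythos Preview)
Your proposal is correct and follows essentially the same approach as the paper: apply the layer-cake identity to $|S_T|$, substitute the sub-Gaussian tail from Lemma~\ref{lem:subgaussian_martingale}, evaluate the Gaussian integral to obtain $\sqrt{2\pi V_T}$, then divide by $T$ and specialize via $V_T\le\sigma^2 T$. The only cosmetic difference is that you make the change of variables explicit, whereas the paper states the integral value directly.
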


\begin{proof}[Proof (see Appendix~\ref{app:proof_lem_subg_martingale_cor_subg_exp})]
Integrate the tail bound of Lemma~\ref{lem:subgaussian_martingale}:
\[
\mathbb E\,|S_T|
=\int_0^\infty \Pr(|S_T|\ge \eta)\,d\eta
\le
\int_0^\infty 2\exp\!\Big(-\frac{\eta^2}{2\mathcal{V}_T}\Big)\,d\eta
=
\sqrt{2\pi\,\mathcal{V}_T}.
\]
Divide by $T$ for the averaged bound.
\end{proof}

\subsection{Length identity for piecewise geodesic interpolation}
\label{subsec:length_convergence}

\begin{lemma}[Discrete--continuous Fisher--Rao length identity]
\label{lem:length-identity}
Let $\theta_1,\ldots,\theta_{T+1}\in\Theta$ be such that for each $t=1,\ldots,T$ there exists a minimizing Fisher--Rao geodesic connecting $\theta_t$ to $\theta_{t+1}$. Let $\gamma_T:[0,1]\to\Theta$ be the piecewise $C^1$ curve obtained by concatenating these minimizing geodesics, parameterized at constant speed on each subinterval $[(t-1)/T,t/T]$ so that $\gamma_T((t-1)/T)=\theta_t$ and $\gamma_T(t/T)=\theta_{t+1}$. Then
\begin{equation}
\label{eq:fr-length-identity}
\sum_{t=1}^{T} d_F(\theta_{t+1},\theta_t)
=
\int_0^1 \big\|\dot\gamma_T(s)\big\|_{g_{\gamma_T(s)}} \, ds,
\end{equation}
where $\dot\gamma_T$ exists for almost every $s\in[0,1]$.
\end{lemma}

\begin{proof}[Proof]
For each $t$, the restriction $\gamma_T|_{[(t-1)/T,t/T]}$ is a constant-speed minimizing geodesic from $\theta_t$ to $\theta_{t+1}$. Hence its Riemannian length equals $d_F(\theta_{t+1},\theta_t)$. Summing over $t=1,\ldots,T$ yields \eqref{eq:fr-length-identity}.
\end{proof}

Together, Lemmas~\ref{lem:local_equivalence}--\ref{lem:length-identity} provide the geometric backbone used in Sections~\ref{sec:setup}--\ref{sec:theory}, and Lemma~\ref{lem:subgaussian_martingale} and Corollary~\ref{cor:subgaussian_expectation} provide the concentration backbone used to control the sampling term.

\section{Problem Setup}
\label{sec:setup}

Having established the geometric and concentration tools in Section~\ref{sec:preliminaries}, we now instantiate them in a stochastic process that captures learning under \emph{endogenous drift}---a regime in which the learner’s own actions alter the distribution from which future observations are drawn. Our goal is to express this closed-loop evolution directly in the Fisher geometry defined earlier, so that distributional motion can be quantified in intrinsic terms.

We begin by specifying the coupled learner--environment dynamics introduced in Section~\ref{sec:intro}. As in Section~\ref{sec:preliminaries}, let $(\mathcal X,\mathcal Y)$ denote the input--output space and $\Theta$ the parameter manifold indexing the family of distributions $\{p_\theta(x,y)\}_{\theta\in\Theta}$. At each discrete time~$t\in\{1,\ldots,T\}$,
\begin{equation}
\label{eq:drift_model}
(x_t,y_t)\mid \mathcal F_{t-1}\sim p_{\theta_t},
\qquad
u_t\sim\pi_t(\mathcal F_{t-1}),
\qquad
\theta_{t+1}=F(\theta_t,u_t,\eta_t),
\end{equation}
where $\mathcal F_t=\sigma(x_1,y_1,u_1,\ldots,x_t,y_t,u_t)$ is the natural filtration generated by past observations and actions. The map~$F$ governs the environment’s evolution, $\pi_t$ is a (possibly history--dependent) policy, and $\eta_t$ represents exogenous latent influences. This is the model we analyze here.

\begin{definition}[Endogenously Drifting Process]
\label{def:endogenous-drift}
A learning process $\{(x_t,y_t,\theta_t,u_t)\}_{t=1}^T$ satisfies \emph{endogenous drift} if the transition $F(\theta_t,u_t,\eta_t)$ depends on the control~$u_t$, thereby coupling the evolution of $\theta_t$ to the policy~$\pi_t$. The variable~$\eta_t$ represents the \emph{exogenous influence} acting on the environment: a collection of factors, deterministic or stochastic, that evolve independently of~$u_t$ and contribute \emph{exogenous drift}.
\end{definition}

\begin{example}[Linear--Gaussian environment]
\label{ex:linear_gaussian}
A classical linear instance of~\eqref{eq:drift_model} is
\[
F(\theta_t,u_t,\eta_t)
~=~
\theta_t + A u_t + B \eta_t,
\]
where $A$ and $B$ determine sensitivity to the learner’s action $u_t$ and to exogenous influences $\eta_t$. The term $Au_t$ captures policy-induced motion (vanishing when $A=0$), while $B\eta_t$ captures background evolution (vanishing when $B=0$).

This instance satisfies Definition~\ref{def:endogenous-drift} exactly and yields closed-form expressions for Fisher--Rao motion, drift magnitudes $(d_t,\kappa_t^{(\mathcal M)})$, and the intrinsic budget $C_T$. It underlies the linear--Gaussian validation in Section~\ref{sec:experiments}.

\medskip
\noindent
\emph{Instantiation (linear regression).}
One concrete realization is linear regression with Gaussian covariates $x_t \sim \mathcal{N}(\mu_t,\Sigma)$ and noise $y_t = w^\top x_t + \varepsilon_t$, where $\varepsilon_t \sim \mathcal{N}(0,\sigma^2)$, with environment state $\theta_t=\mu_t$ evolving according to the same map $F$.
\end{example}

Endogenous drift thus produces a closed-loop system in which learner and environment co-evolve. If $F$ is independent of~$u_t$, the setting reduces to non-stationary optimization with purely exogenous drift. If $F$ is constant, it reduces to the classical iid\ regime. The general case---where both influences are present---yields an evolving distributional flow driven jointly by background change and policy-induced perturbations.

A related question is whether such coupling might guide the process toward a performative equilibrium, so that policy-induced drift \(\kappa^{(M)}_t\) eventually vanishes. In some settings this may occur, in which case the policy-sensitive contribution to one-step distributional motion correspondingly disappears. In general, however, we do not assume such convergence. The joint evolution of \((f_t,\theta_t)\) is not assumed to be a gradient flow on a shared potential, and the environment map \(F\) need not be contractive. Even when fixed points exist, stochasticity, misspecification, exogenous change, or misaligned objectives can yield persistent cycles or wandering finite-horizon trajectories. Prior work on performative prediction typically imposes stability conditions to study equilibria \citep{Perdomo2020}. Our analysis instead addresses the moving regime itself, in which the learner--environment system may continue to move along the statistical manifold and prequential step-to-step reproducibility is governed by the corresponding motion rate.

With the dynamics specified, we now express the relevant statistical quantities along the evolving trajectory. The parameter space $\Theta$ carries the Fisher--Rao metric $g_\theta$ and geodesic distance~$d_F$ introduced in Section~\ref{sec:preliminaries}. All norms, distances, and path lengths are understood with respect to this metric, with local equivalence and curvature bounds provided by the regularity conditions. We will keep two distinct effects separate. The first is the usual sampling fluctuation. Even if the data--generating law were fixed, the empirical trajectory average need not equal its conditional expectation. The second is geometric. As $\theta_t$ moves, the population risk of a \emph{within-step fixed} predictor can change from one step to the next.

We write the empirical trajectory average and corresponding population risk
\[
\widehat R_T(f,\pi)
=\frac{1}{T}\sum_{t=1}^T \ell\!\bigl(f_t(x_t),y_t\bigr),
\qquad
R(\theta,f):=\mathbb E_{(x,y)\sim p_\theta}\!\big[\ell(f(x),y)\big].
\]

\paragraph{Trajectory (same-time) risk and sampling deviation.}
The realized-trajectory population risk is
\[
R_T^{\mathrm{traj}}(f,\pi)
=\frac{1}{T}\sum_{t=1}^T R(\theta_t,f_t)
=\frac{1}{T}\sum_{t=1}^T
\mathbb{E}_{(x,y)\sim p_{\theta_t}}
\!\left[\ell(f_t(x),y)\right].
\]
Its sampling deviation is
\[
\Delta_T^{\mathrm{sam}}(f,\pi)
~=~
\bigl|\widehat R_T(f,\pi)-R_T^{\mathrm{traj}}(f,\pi)\bigr|.
\]
Under the filtration conditions stated in Section~\ref{sec:preliminaries}, the centered losses
\[
Z_t:=\ell\!\bigl(f_t(x_t),y_t\bigr)-R(\theta_t,f_t)
\]
form a martingale difference sequence, so $\Delta_T^{\mathrm{sam}}$ is controlled by martingale concentration (as in the stationary case).

\paragraph{Prequential risk and drift penalty.}
To quantify distributional motion without entangling it with the evolution of $f_t$, we measure how the population risk of the deployed predictor $f_t$ changes across the environment transition $\theta_t\to\theta_{t+1}$:
\[
v_t(f,\pi)
~:=~
\bigl|R(\theta_{t+1},f_t)-R(\theta_t,f_t)\bigr|.
\]
Its time average
\[
V_T(f,\pi)
~:=~
\frac{1}{T}\sum_{t=1}^T v_t(f,\pi)
\]
isolates the effect of environment motion: if $\theta_{t+1}=\theta_t$ then $V_T(f,\pi)=0$ regardless of how $f_t$ evolves.

The prequential (one-step-ahead) population risk is
\[
R_T^{+}(f,\pi)
~:=~
\frac{1}{T}\sum_{t=1}^T R(\theta_{t+1},f_t),
\]
and we define the corresponding prequential gap
\[
\Delta_T^{\mathrm{rep}}(f,\pi)
~:=~
\bigl|\widehat R_T(f,\pi)-R_T^{+}(f,\pi)\bigr|.
\]
An add--subtract decomposition yields the full separation
\[
\Delta_T^{\mathrm{rep}}(f,\pi)
\le
\underbrace{\left|\frac1T\sum_{t=1}^T
\bigl(\ell(f_t(x_t),y_t)-R(\theta_t,f_t)\bigr)\right|}_{\Delta_T^{\mathrm{sam}}(f,\pi)}
+
\underbrace{\frac1T\sum_{t=1}^{T}\bigl|R(\theta_{t+1},f_t)-R(\theta_t,f_t)\bigr|}_{V_T(f,\pi)}.
\]
Equivalently,
\begin{equation}
\label{eq:rep_decomp_setup}
\Delta_T^{\mathrm{rep}}(f,\pi)
\;\le\;
\Delta_T^{\mathrm{sam}}(f,\pi)
\;+\;
V_T(f,\pi).
\end{equation}

Thus sampling noise contributes the classical $T^{-1/2}$ term, while drift enters only through the geometric penalty $V_T$, which will be controlled by Fisher--Rao motion along the realized trajectory.

Equation \eqref{eq:rep_decomp_setup} also shows why standard average loss curves can be misleading in closed loop. Because $\Delta_T^{\mathrm{rep}}$ is a \emph{trajectory-average} gap, it can remain small even when substantial drift occurs, so long as the resulting risk shifts do not register as persistent average discrepancy. By contrast, $V_T$ records the \emph{within-step} population-risk change of a fixed predictor across each transition $\theta_t\!\to\!\theta_{t+1}$, and therefore isolates drift effects that may be invisible in aggregate averages. The subsequent analysis relates this drift penalty to intrinsic Fisher motion along the trajectory (via $C_T$), and Section~\ref{sec:theory} formalizes the associated speed-limit regime by matching the
upper bound with a lower bound for the same prequential reproducibility gap.

The drift primitives $(d_t,\kappa_t^{(\mathcal M)})$ and the budget $C_T$ are chosen precisely so that Fisher--Rao motion controls $V_T$. We therefore describe the environment’s motion on $(\Theta,g_\theta)$ by separating an exogenous baseline step from a policy--sensitive perturbation.

\paragraph{Exogenous drift.}
The exogenous drift magnitude is
\[
d_t
:= d_F\!\big(F(\theta_t,0,\eta_t),\,\theta_t\big),
\]
capturing the motion that would occur at time $t$ in the absence of control.

\paragraph{Policy-sensitive drift.}
Policy-sensitive drift is measured by the leading-order Fisher--metric motion induced by the learner’s action. Under Assumption~\ref{assump:environment}, the map
$u\mapsto F(\theta_t,u,\eta_t)$ is differentiable at $u=0$, so
\[
F(\theta_t,u_t,\eta_t)
=
F(\theta_t,0,\eta_t)
+
J_uF(\theta_t,0,\eta_t)\,u_t
+
r_t,
\]
with a quadratic remainder. The differential $J_uF(\theta_t,0,\eta_t)\,u_t$ is a tangent vector at the baseline next state $F(\theta_t,0,\eta_t)$, and we define
\begin{equation}
\label{eq:kappa-grad}
\kappa_t^{(\mathcal M)}
:=
\|J_uF(\theta_t,0,\eta_t)\,u_t\|_{g_{F(\theta_t,0,\eta_t)}}.
\end{equation}

We will use this linearization in Section~\ref{sec:drift_feedback} to compare $\kappa_t^{(\mathcal M)}$ to intrinsic Fisher--Rao displacements.

These components combine to form the decomposed Fisher--Rao drift budget
\[
C_T =
\sum_{t=1}^{T}
\left(d_t+\alpha \kappa_t^{(\mathcal M)}\right).
\]
The intrinsic path length \(\mathcal A_T\) records how far the realized law moves, while \(C_T\) records why it moves, separating exogenous and policy-sensitive contributions in the same local Fisher geometry. The scalar \(\alpha>0\) is not a model parameter. It is a fixed local comparison constant, uniform on \(K\), that converts the linearized policy-sensitive displacement into the same Fisher--Rao scale as the exogenous step. It may absorb constants from the bounded-geometry comparisons relating \(J_uF(\theta_t,0,\eta_t)u_t\) to the corresponding Fisher--Rao displacement.

The update~\eqref{eq:drift_model} defines a trajectory on $(\Theta,g_\theta)$. The quantities $d_t$ and $\kappa_t^{(\mathcal M)}$ describe its incremental motion, while $C_T$ records the accumulated budget for this motion. Together, $d_t$, $\kappa_t^{(\mathcal M)}$, and~$C_T$ serve as the geometric primitives of the model and anchor the drift--feedback bounds developed in Section~\ref{sec:theory}.

\section{Quantifying Drift and Feedback}
\label{sec:drift_feedback}

The quantities introduced in Section~\ref{sec:setup} describe the instantaneous motion of the data--generating process on the statistical manifold \((\Theta,g_\theta)\). This section shows how these local motions accumulate over time. The resulting drift--feedback budget controls the intrinsic Fisher--Rao path length up to a second-order residual and, through the Fisher--risk regularity condition, controls the trajectory-averaged population-risk variation. Combined with the sampling decomposition, these estimates lead to the prequential reproducibility bounds in Section~\ref{sec:theory}.

\subsection{Drift Decomposition}

Since $F$ is smooth in the control variable (Assumption~\ref{assump:environment}), we can expand the update in a fixed local coordinate chart $\Theta\subset\mathbb R^d$:
\begin{equation}
\label{eq:drift_decomp_clean}
\theta_{t+1}
~=~
F(\theta_t,0,\eta_t)
\;+\;
J_uF(\theta_t,0,\eta_t)\,u_t
\;+\;
r_t,
\end{equation}
where $r_t$ collects higher--order terms. The role of \eqref{eq:drift_decomp_clean} is purely \emph{comparative}: it is a coordinate representation used to control intrinsic Fisher--Rao distances on the compact region $K$ visited by the realized trajectory. The chart expansion is used only to obtain local comparison bounds; the
subsequent estimates are stated in Fisher--Rao distance.

To isolate the genuinely second--order contribution, define
\[
\epsilon_t := \|r_t\|_{g_{F(\theta_t,0,\eta_t)}}.
\]
By Assumption~\ref{assump:environment}(2), the remainder is pointwise quadratic, so
\begin{equation}
\label{eq:eps_bound}
\epsilon_t \le c \|u_t\|^2.
\end{equation}

The intrinsic one--step motion is measured by the Fisher--Rao distance $d_F(\theta_{t+1},\theta_t)$. We first split this motion into a baseline (exogenous) step and a policy--sensitive perturbation using the triangle inequality:
\begin{align}
\label{eq:triangle_split}
d_F(\theta_{t+1},\theta_t)
&\le
d_F\!\big(\theta_{t+1},F(\theta_t,0,\eta_t)\big)
\;+\;
d_F\!\big(F(\theta_t,0,\eta_t),\theta_t\big).
\end{align}
The second term is exactly the exogenous drift distance; recalling the definition from Section~\ref{sec:setup},
\[
d_t
:=d_F\!\big(F(\theta_t,0,\eta_t),\theta_t\big).
\]

The first term requires converting the local chart expansion in Assumption~\ref{assump:environment}(2) into an intrinsic distance comparison. By the bounded-geometry conditions in Assumption~\ref{assump:regularity}, and restricting to the local regime in which the controlled perturbation remains in a uniform normal neighborhood of \(F(\theta_t,0,\eta_t)\), the chart remainder can be absorbed into an intrinsic second-order term. Thus there is a constant \(\alpha>0\), uniform on \(K\), such that
\begin{equation}
\label{eq:local-control-distance}
d_F\!\left(\theta_{t+1},F(\theta_t,0,\eta_t)\right)
\leq
\alpha \kappa_t^{(\mathcal M)}+\epsilon_t,
\end{equation}
where
\[
\kappa_t^{(\mathcal M)}
=
\|J_uF(\theta_t,0,\eta_t)u_t\|_{g_{F(\theta_t,0,\eta_t)}}
\]
and $\epsilon_t$ is the second-order remainder controlled as in~\eqref{eq:eps_bound}. The constant $\alpha$ collects the uniform local comparison constants between the linearized controlled displacement in the chosen chart and Fisher--Rao distance
on $K$.

Combining \eqref{eq:triangle_split} and \eqref{eq:local-control-distance} yields:
\begin{equation}
\label{eq:one_step_motion}
d_F(\theta_{t+1},\theta_t)
\le 
d_t \;+\; \alpha\,\kappa_t^{(\mathcal M)} \;+\; \epsilon_t.
\end{equation}
Equation~\eqref{eq:one_step_motion} expresses intrinsic motion as the sum of the exogenous step, the policy sensitive contribution, and a second-order residual. The residual is isolated in $\epsilon_t$ and controlled by
\eqref{eq:eps_bound}.

\subsection{Cumulative Motion Rate and the Intrinsic Drift Budget}
\label{subsec:cumulative-motion}

Accumulating intrinsic one--step distances yields the discrete Fisher--Rao path length
\begin{equation}
\label{eq:path_length}
\mathcal{A}_T
~=~
\sum_{t=1}^T d_F(\theta_{t+1},\theta_t).
\end{equation}
Summing over $t$ gives
\begin{equation}
\label{eq:budget_relation}
\mathcal A_T \le \sum_{t=1}^T (d_t + \alpha \kappa_t^{(\mathcal M)} + \epsilon_t)
= C_T + \sum_{t=1}^T \epsilon_t.
\end{equation}
Recalling the decomposed Fisher--Rao drift budget
\[
C_T=\sum_{t=1}^T\bigl(d_t+\alpha\,\kappa_t^{(\mathcal M)}\bigr),
\]
Equation~\eqref{eq:budget_relation} shows that \(C_T\) controls the realized Fisher--Rao path length up to the quadratic residual \(\sum_{t=1}^T\epsilon_t\), which is upper-bounded by control energy via \eqref{eq:eps_bound}. Thus \(\mathcal A_T\) records the intrinsic length of the realized path, while \(C_T\) gives a decomposed budget for that motion, separating exogenous and policy-sensitive contributions in the same Fisher--Rao units.

\subsection{From Geometric Motion to Risk Variation}
\label{subsec:motion-to-risk}

We now relate Fisher--Rao motion to variation in population risk. On the regularity region $K$, we assume the  population risk is Lipschitz in Fisher--Rao distance. Lemma~\ref{lem:local_fisher_risk} in Section~\ref{subsec:main_theorem} gives a convenient sufficient condition under which this Lipschitz-in-Fisher property holds.

\begin{assumption}[Risk regularity on $K$]
\label{assump:risk_lipschitz}
There exists $L_p<\infty$ such that for all $\theta,\theta'\in K$ and all measurable predictors $f$ considered by the learner,
\[
|R(\theta',f)-R(\theta,f)|
~\le~
L_p\,d_F(\theta',\theta),
\]
where $R(\theta,f)$ is the population risk defined in Section~\ref{sec:setup}.
\end{assumption}

Applying Assumption~\ref{assump:risk_lipschitz} with $(\theta,\theta')=(\theta_t,\theta_{t+1})$ gives
\begin{equation}
\label{eq:risk_drift_bound}
|R(\theta_{t+1},f_t)-R(\theta_t,f_t)|
~\le~
L_p\,d_F(\theta_{t+1},\theta_t).
\end{equation}
Combining \eqref{eq:risk_drift_bound} with the one-step motion bound \eqref{eq:one_step_motion} yields the pathwise risk-variation control
\begin{equation}
\label{eq:risk_motion_bound_pre}
|R(\theta_{t+1},f_t)-R(\theta_t,f_t)|
\;\le\;
L_p\,d_t
\;+\;
L_p\alpha\,\kappa_t^{(\mathcal M)}
\;+\;
L_p\,\epsilon_t.
\end{equation}
Using \eqref{eq:eps_bound}, the residual term is controlled by control energy, so there exist constants \(C_1,C_2,C_3>0\), depending only on \(L_p\), \(\alpha\),
the remainder constant \(c\), and the norm-equivalence constants on \(K\), such that
\begin{equation}
\label{eq:risk_motion_bound}
|R(\theta_{t+1},f_t)-R(\theta_t,f_t)|
\;\le\;
C_1\,d_t
\;+\;
C_2\,\kappa_t^{(\mathcal M)}
\;+\;
C_3\,\|u_t\|^2.
\end{equation}

For later use, recall the trajectory risk variation
\begin{equation}\label{eq:risk_stability}
V_T(f,\pi)
:=
\frac1T\sum_{t=1}^{T}
      \bigl|R(\theta_{t+1},f_t)-R(\theta_t,f_t)\bigr|.
\end{equation}
The bound \eqref{eq:risk_motion_bound_pre} converts Fisher--Rao motion into one-step population-risk change. After summing over \(t\), it gives the drift term in the prequential decomposition \eqref{eq:rep_decomp_setup}.

\section{Theoretical Results}
\label{sec:theory}

The geometric structure introduced above yields sharp limits for learning under the endogenous drift model \eqref{eq:drift_model}. We control $\Delta_T^{\mathrm{sam}}$ by martingale concentration and we bound $V_T$ by Fisher--Rao motion, relating it to the decomposed drift budget $C_T$. This gives a prequential reproducibility bound of order \(T^{-1/2}+C_T/T\). We then prove a matching lower bound for the same gap \(\Delta_T^{\mathrm{rep}}\) on a canonical regular drift--feedback subclass, showing that this scaling is sharp for prequential reproducibility itself. A second, information-theoretic lower bound shows that this drift effect can remain statistically indistinguishable at the level of realized prequential evidence.

Throughout this section, expectations and suprema are taken over the relevant classes of drift--feedback processes generated by Assumptions~\ref{assump:regularity}--\ref{assump:risk_lipschitz} and the transition model \(\theta_{t+1}=F(\theta_t,u_t,\eta_t)\) in \eqref{eq:drift_model}, with the specified learning rules and policies. Full proofs are provided in Appendix~\ref{app:prelim_proofs}.

\subsection{Local Fisher--Risk Coupling}
\label{subsec:local_fisher_risk}

Let the population loss of the current learner \(f_t\) under environment
parameter \(\theta\) be
\[
\mathcal L_t(\theta)
~:=~
\mathbb E_{(x,y)\sim p_\theta}\!\left[\ell(f_t(x),y)\right].
\]
The sampling argument controls empirical fluctuation around \(R(\theta_t,f_t)\). To control the drift term, we also need population risk to vary regularly as the environment parameter moves. Assumption~\ref{assump:risk_lipschitz} imposes this as a uniform Fisher--Rao Lipschitz condition on \(K\), and Lemma~\ref{lem:local_fisher_risk} gives a convenient sufficient condition with an explicit constant.

\begin{lemma}[Local Fisher--Risk Coupling]
\label{lem:local_fisher_risk}
Fix $t$ and write $\mathcal L_t(\theta)=\mathbb E_{p_\theta}[\ell(f_t(x),y)]$.
Assume that for this $t$ the centered loss
\[
\widetilde Z_{\theta,t}(x,y)
:=
\ell(f_t(x),y)-\mathcal L_t(\theta)
\]
has a uniform second-moment envelope over \(K\):
\[
\sup_{\theta\in K}
\mathbb E_{p_\theta}\!\left[
\widetilde Z_{\theta,t}(x,y)^2
\right]
~\le~
\sigma_2^2 .
\]
Then Assumption~\ref{assump:risk_lipschitz} holds for this $t$ with
\(L_p=\sigma_2\).
In particular, the same conclusion holds under the stronger uniform
\(\sigma\)-sub-Gaussian condition with
\(L_p=\sqrt{c_{\mathrm{sg}}}\,\sigma\), where \(c_{\mathrm{sg}}\) is the universal constant relating sub-Gaussian tails to second moments under the adopted convention.
\end{lemma}

\begin{proof}[Proof]
Let $\gamma:[0,1]\to K$ be a constant-speed Fisher geodesic from $\theta$ to $\theta'$, and define $g(s)=\mathcal L_t(\gamma(s))$. Differentiating under the integral sign (justified by Assumption~\ref{assump:regularity}) yields
\[
g'(s)
=
\mathbb E_{(x,y)\sim p_{\gamma(s)}}\!\left[
\ell(f_t(x),y)\,
\langle s_{\gamma(s)}(x,y),\dot\gamma(s)\rangle
\right],
\]
where $s_\vartheta=\nabla_\vartheta\log p_\vartheta$ is the score. Using $\mathbb E_{p_{\vartheta}}[s_{\vartheta}]=0$ and centering the loss gives
\[
g'(s)
=
\mathbb E_{p_{\gamma(s)}}\!\left[
\widetilde Z_{\gamma(s),t}(x,y)\,
\langle s_{\gamma(s)}(x,y),\dot\gamma(s)\rangle
\right].
\]
By Cauchy--Schwarz and the definition of the Fisher metric,
\[
|g'(s)|
\le
\sqrt{\mathbb E_{p_{\gamma(s)}}\!\left[\widetilde Z_{\gamma(s),t}^2\right]}\,
\|\dot\gamma(s)\|_{g_{\gamma(s)}}.
\]
The uniform second-moment envelope implies
\[
\mathbb E_{p_{\gamma(s)}}\!\left[\widetilde Z_{\gamma(s),t}^2\right]
\le
\sigma_2^2,
\]
uniformly in \(s\). Hence
\[
|g'(s)|
\le
\sigma_2\,\|\dot\gamma(s)\|_{g_{\gamma(s)}} .
\]
Integrating along $\gamma$ yields
\[
|\mathcal L_t(\theta')-\mathcal L_t(\theta)|
\le
\sigma_2 \int_0^1 \|\dot\gamma(s)\|_{g_{\gamma(s)}}\,ds
=
\sigma_2\, d_F(\theta',\theta),
\]
which is exactly Assumption~\ref{assump:risk_lipschitz} with \(L_p=\sigma_2\).
The final sub-Gaussian statement follows because uniform \(\sigma\)-sub-Gaussianity implies
\[
\mathbb E_{p_\theta}\!\left[\widetilde Z_{\theta,t}^2\right]
\le
c_{\mathrm{sg}}\sigma^2 .
\]
\end{proof}

\paragraph{Interpretation.}
Assumption~\ref{assump:risk_lipschitz} is the geometry-to-risk regularity condition used to control \(V_T\). Lemma~\ref{lem:local_fisher_risk} gives a simple sufficient condition: a uniform second-moment envelope on the centered loss makes \(\mathcal L_t(\theta)\) Lipschitz in Fisher--Rao distance on \(K\). Uniform sub-Gaussianity is one convenient way to obtain such an envelope, but is stronger than the proof requires. This regularity is separate from dynamical stability; it does not require the learner--environment system to move slowly or converge.
\subsection{Drift--Feedback Bounds}
\label{subsec:main_theorem}

We now bound the prequential gap $\Delta_T^{\mathrm{rep}}$ by assembling the decomposition \eqref{eq:rep_decomp_setup}, martingale concentration for the sampling term, and Fisher--Rao motion control for the drift penalty $V_T$.

\begin{lemma}[Summed drift penalty]
\label{lem:VT_budget}
Let $V_T$ be as in \eqref{eq:risk_stability}. Under Assumption~\ref{assump:risk_lipschitz},
\begin{equation}\label{eq:VT_budget_pathwise}
V_T(f,\pi)
\;\le\;
\frac{L_p}{T}\Big(C_T + \sum_{t=1}^T \epsilon_t\Big),
\end{equation}
where $\epsilon_t$ is the remainder from \eqref{eq:eps_bound}. Consequently, using \eqref{eq:eps_bound},
\[
V_T(f,\pi)\;\le\;\frac{L_p}{T}\,C_T \;+\; \frac{L_pc}{T}\sum_{t=1}^T \|u_t\|^2 .
\]
\end{lemma}

\begin{proof}[Proof (See Appendix~\ref{app:proof_lem_VT_budget})]
By \eqref{eq:risk_drift_bound} and \eqref{eq:one_step_motion},
$|R(\theta_{t+1},f_t)-R(\theta_t,f_t)|
\le
L_p(d_t+\alpha\,\kappa_t^{(\mathcal M)}+\epsilon_t)$.
Sum over $t$ and divide by $T$ to obtain \eqref{eq:VT_budget_pathwise}, then apply \eqref{eq:eps_bound}.
\end{proof}

\begin{lemma}[Collected bounds]
\label{lem:collected_bounds}
For any learner $(f,\pi)$ in the above setting,
\[
\mathbb E\,\Delta_T^{\mathrm{sam}}(f,\pi)
\;\le\;
\frac{\sqrt{2\pi}\,\sigma}{\sqrt{T}},
\qquad
\mathbb E\,V_T(f,\pi)
\;\le\;
\frac{L_p}{T}\,\mathbb E[C_T]
+
\frac{L_p c}{T}\sum_{t=1}^T \mathbb E\|u_t\|^2 .
\]
\end{lemma}

\begin{proof}[Proof]
For the sampling term, $Z_t=\ell(f_t(x_t),y_t)-R(\theta_t,f_t)$ is a martingale difference sequence under \eqref{eq:drift_model}, so Corollary~\ref{cor:subgaussian_expectation} gives $\mathbb E\,\Delta_T^{\mathrm{sam}}\le \sqrt{2\pi}\sigma/\sqrt{T}$.
For the drift term, apply Lemma~\ref{lem:VT_budget} and take expectations.
\end{proof}

\begin{theorem}[Prequential reproducibility under drift]
\label{thm:main_rep}
For any learner $(f,\pi)$ in the above setting,
\[
\mathbb E\,\Delta_T^{\mathrm{rep}}(f,\pi)
\;\le\;
\frac{\sqrt{2\pi}\,\sigma}{\sqrt T}
\;+\;
\frac{L_p}{T}\,\mathbb E[C_T]
\;+\;
\frac{L_p c}{T}\sum_{t=1}^T \mathbb E\|u_t\|^2.
\]
\end{theorem}

\begin{proof}[Proof (See Appendix~\ref{app:proof_thm_main_rep})]
We start from the pathwise decomposition \eqref{eq:rep_decomp_setup}:
\[
\Delta_T^{\mathrm{rep}}(f,\pi)
\le
\Delta_T^{\mathrm{sam}}(f,\pi)+V_T(f,\pi).
\]
Taking expectations yields
\begin{equation}
\label{eq:rep_expected_bound}
\mathbb E\,\Delta_T^{\mathrm{rep}}(f,\pi)
\le
\mathbb E\,\Delta_T^{\mathrm{sam}}(f,\pi)
+
\mathbb E\,V_T(f,\pi).
\end{equation}

Define $Z_t:=\ell(f_t(x_t),y_t)-R(\theta_t,f_t)$. Under \eqref{eq:drift_model}, $\{Z_t\}_{t=1}^T$ is a martingale difference sequence:
\[
\mathbb E[Z_t\mid\mathcal F_{t-1}]=0.
\]
Assumption~\ref{assump:regularity}(4) gives the conditional $\sigma$--sub-Gaussian mgf bound.
Hence Corollary~\ref{cor:subgaussian_expectation} yields
\[
\mathbb E\,\Delta_T^{\mathrm{sam}}(f,\pi)
=
\mathbb E\left|\frac1T\sum_{t=1}^T Z_t\right|
\le
\frac{\sqrt{2\pi}\,\sigma}{\sqrt{T}}.
\]

By Lemma~\ref{lem:VT_budget} and \eqref{eq:eps_bound},
\[
V_T(f,\pi)
\le
\frac{L_p}{T}\,C_T
+
\frac{L_pc}{T}\sum_{t=1}^T \|u_t\|^2.
\]
Taking expectations yields
\[
\mathbb E\,V_T(f,\pi)
\le
\frac{L_p}{T}\,\mathbb E[C_T]
+
\frac{L_pc}{T}\sum_{t=1}^T \mathbb E\|u_t\|^2.
\]

Substituting the two displays into \eqref{eq:rep_expected_bound} yields the theorem.
\end{proof}

\paragraph{Budgeted drift--feedback classes.}
For \(C>0\) and \(\Lambda<\infty\), let \(\mathcal P_C(\Lambda)\) denote the class of regular drift--feedback processes satisfying Assumptions~\ref{assump:regularity}--\ref{assump:risk_lipschitz} and the local comparison conditions above, with
\[
\mathbb E_P C_T \le C,
\qquad
\mathbb E_P\sum_{t=1}^T \epsilon_t \le \Lambda C .
\]
The first condition bounds the leading Fisher--Rao drift budget. The second keeps
the accumulated local-comparison remainder on the same order as that budget, so
that the rate statement is governed by \(C/T\) rather than by an uncontrolled
second-order term. Under the quadratic remainder bound
\(\epsilon_t\le c\|u_t\|^2\), the second condition is implied by accumulated
control energy of order \(C\). When \(\Lambda\) is fixed, we write
\(\mathcal P_C\) and absorb constants depending on \(\Lambda\) into the comparison notation.

\begin{corollary}[Budgeted-rate upper bound]
\label{cor:budgeted_upper}
On any budgeted drift--feedback class \(\mathcal P_C\),
\[
\sup_{P\in\mathcal P_C}
\mathbb E_P\Delta_T^{\mathrm{rep}}
\lesssim
T^{-1/2}+\frac{C}{T},
\]
where the implicit constant depends only on the fixed regularity, risk-Lipschitz,
local-comparison, and \(\Lambda\)-constants.
\end{corollary}

\begin{proof}
Apply Theorem~\ref{thm:main_rep} and use
\(\mathbb E_P C_T\le C\) and
\(\mathbb E_P\sum_t\epsilon_t\le \Lambda C\).
\end{proof}
\paragraph{Interpretation.}
Theorem~\ref{thm:main_rep} separates the prequential gap into a classical sampling term scaling as $T^{-1/2}$ and a drift penalty controlled by the average intrinsic budget $C_T/T$. Thus, when the drift term is small relative to \(T^{-1/2}\), the bound is sampling-dominated; when the Fisher-motion term induces risk variation at a larger scale, the drift component dominates the upper bound.

\subsection{Sharpness of the Gap and Indistinguishability of Drift Effects}
\label{subsec:lower_bound}

We now give two complementary lower bounds. The first shows that the rate in Theorem~\ref{thm:main_rep} is sharp for the same prequential reproducibility gap \(\Delta_T^{\mathrm{rep}}\). The second shows that order-\(C/T\) effects on the one-step-ahead target need not be identifiable from the realized prequential evidence alone. Together, these results show that the \(C/T\) term is not an artifact of the upper-bound argument. It can appear in the empirical prequential gap itself, and in hard instances the corresponding drift effect can be statistically difficult to distinguish.

\subsubsection{Sharpness for the prequential gap}

\begin{theorem}[Sharpness of the prequential reproducibility rate]
\label{thm:lower_bound}
There exist constants \(c,r_0>0\) and a canonical regular drift--feedback
subclass \(\mathcal P_C^{\mathrm{hard}}\subseteq\mathcal P_C\)
satisfying Assumptions~\ref{assump:regularity}--\ref{assump:risk_lipschitz}, with \(C_T\le C\), vanishing second-order remainder, and \(C/T\le r_0\), such that
\[
\sup_{P\in\mathcal P_C^{\mathrm{hard}}}
\mathbb E_P\,\Delta_T^{\mathrm{rep}}
\ge
c\max\left(T^{-1/2},\frac{C}{T}\right).
\]
\end{theorem}

\begin{proof}[Proof (see Appendix~\ref{app:proof_sharpness})]
Work in a one-dimensional regular exponential family restricted to a compact Fisher normal neighborhood and use Fisher arclength coordinates, so
\(d_F(\theta,\theta')=|\theta-\theta'|\). Let
\[
\theta_{t+1}=\theta_t+u_t,\qquad \eta_t\equiv0 .
\]
Then \(d_t\equiv0\), \(\kappa_t^{(\mathcal M)}=|u_t|\), and the second-order remainder vanishes.

For the drift component, choose an alternating local trajectory with active steps \(|u_t|=\delta\), remaining inside the compact neighborhood, and choose
\(\delta\) so that
\[
C_T=\alpha\sum_t |u_t|\asymp C .
\]
The bounded loss and history-independent predictor sequence in Appendix~\ref{app:proof_sharpness} align the local population-risk slope with the chosen motion, so that on active steps
\[
R(\theta_{t+1},f_t)-R(\theta_t,f_t)
\ge c_0 d_F(\theta_{t+1},\theta_t).
\]
Therefore
\[
R_T^+-R_T^{\mathrm{traj}}
=
\frac1T\sum_{t=1}^T
\{R(\theta_{t+1},f_t)-R(\theta_t,f_t)\}
\ge
\frac{c_0}{T}\sum_t d_F(\theta_{t+1},\theta_t)
\asymp \frac{C}{T}.
\]
Since \(\mathbb E_P[\widehat R_T-R_T^{\mathrm{traj}}]=0\), Jensen's inequality gives
\[
\mathbb E_P\Delta_T^{\mathrm{rep}}
=
\mathbb E_P|\widehat R_T-R_T^+|
\ge
\left|\mathbb E_P[\widehat R_T-R_T^+]\right|
=
|R_T^{\mathrm{traj}}-R_T^+|
\gtrsim \frac{C}{T}.
\]
The \(T^{-1/2}\) term follows from a stationary nondegenerate subclass
\((C_T=0)\), where \(\Delta_T^{\mathrm{rep}}=\Delta_T^{\mathrm{sam}}\) and ordinary sampling fluctuation has expectation \(\gtrsim T^{-1/2}\). Taking \(\mathcal P_C^{\mathrm{hard}}\) to be the union of the drift-bias subclass and this stationary subclass gives the displayed \(\max(T^{-1/2},C/T)\) lower bound.
\end{proof}

\paragraph{Interpretation.}
Theorem~\ref{thm:lower_bound} lower-bounds the same object controlled in Theorem~\ref{thm:main_rep}. The construction shows that the drift term can appear as a signed bias between realized-trajectory risk and the one-step-ahead population target. When the local risk shifts align with the Fisher motion, this bias accumulates at rate \(C/T\). Thus the \(C/T\) term is not an artifact of the upper-bound proof; it is forced by regular drift--feedback trajectories. The one-dimensional construction should be read as a hard Fisher direction inside
the regular class; embedding the same arclength submodel in a higher-dimensional
family gives the same obstruction.

\begin{corollary}[Sharp prequential speed limit]
\label{cor:speed_limit}
Let \(\mathcal P_C\) be a budgeted regular drift--feedback class with controlled second-order remainder that contains the canonical hard subclass
\(\mathcal P_C^{\mathrm{hard}}\) of Theorem~\ref{thm:lower_bound}. Then
\[
\sup_{P\in\mathcal P_C}
\mathbb E_P\Delta_T^{\mathrm{rep}}
=
\Theta\!\left(T^{-1/2}+\frac{C}{T}\right),
\]
with constants depending only on the fixed regularity, risk-Lipschitz, and remainder-control constants.
\end{corollary}

\paragraph{Interpretation.}
Together, the upper bound and Theorem~\ref{thm:lower_bound} show that \(T^{-1/2}+C/T\) is the sharp rate for prequential reproducibility itself. The average Fisher--Rao drift rate \(C/T\) is therefore the correct order of the drift contribution in this local regularity class.

\subsubsection{Information-theoretic indistinguishability of drift effects}

The preceding result shows that the \(C/T\) term is structurally sharp for \(\Delta_T^{\mathrm{rep}}\). We now give a complementary lower bound showing that the drift effect responsible for this term need not be identifiable from the realized prequential evidence. The construction uses short Fisher geodesic excursions that return to a common base point. Fisher--Rao path length, and hence the induced one-step-ahead risk variation, accumulates linearly in the excursion size, whereas KL divergence between candidate trajectory laws accumulates only quadratically. Thus a drift effect of order \(C/T\) can be present while the candidate processes remain hard to distinguish.

\begin{theorem}[Information-theoretic indistinguishability of drift effects]
\label{thm:indistinguishability}
There exists a canonical regular drift--feedback subclass
\(\mathcal P_C^{\mathrm{hide}}\) satisfying Assumptions~\ref{assump:regularity}--\ref{assump:risk_lipschitz} and
\(E_P C_T\le C\) such that, with
\[
R_T^+(P):=\frac1T\sum_{t=1}^T R(\theta_{t+1},f_t),
\]
\[
\inf_{\widehat R_T}
\sup_{P\in\mathcal P_C^{\mathrm{hide}}}
E_P\left|\widehat R_T-R_T^+(P)\right|
\ge
c\max\left(T^{-1/2},\frac{C}{T}\right).
\]
\end{theorem}

\begin{proof}[Proof (see Appendix~\ref{app:proof_invisibility})]
We define $\mathcal P_C^{\mathrm{hide}}$ by an explicit one-dimensional exponential-family model and a family of deterministic drift trajectories indexed by a codebook.

\emph{Model and geometry.}
Fix a one-dimensional regular exponential family $\{p_\theta\}$ on a compact interval where Fisher information is bounded above and below, and reparameterize into Fisher arclength coordinates so that $d_F(\theta,\theta')=|\theta-\theta'|$ on this interval \citep[cf.][]{AmariNagaoka2000}.
Consider controllable dynamics
\[
F(\theta_t,u_t,\eta_t)=\theta_t+u_t,\qquad \eta_t\equiv 0,
\]
so $d_t\equiv 0$, $J_uF\equiv 1$, and the remainder term vanishes.
In this subclass $\kappa_t^{(\mathcal M)}=|u_t|$ and $C_T=\alpha\sum_{t=1}^T |u_t|$.

\emph{Excursions and budget.}
Fix a step size $\delta>0$ and partition the horizon into $m$ disjoint two-step blocks. For a sign vector $v\in\{-1,+1\}^m$, define a history-independent policy that in block $j$ applies $u=v_j\delta$ and then $u=-v_j\delta$, returning to the base point after two steps. The resulting parameter path executes $m$ geodesic excursions of length $2\delta$, hence
\[
C_T(P_v)=\alpha\sum_{t=1}^T|u_t|=2\alpha m\delta.
\]
Choose
\[
m=\Big\lfloor \frac{C}{2\alpha\delta}\Big\rfloor \wedge \Big\lfloor \frac{T}{2}\Big\rfloor,
\]
so $C_T(P_v)\le C$ for all $v$ (in fact $C_T$ is deterministic here).

\emph{Risk separation.}
Fix a bounded loss and model pair and fix a (history-independent) learner sequence $(f_t)$ for which the population risk has a nonzero local slope at the base point $\theta_0$ on the indices used in the construction: there exists $c_0>0$ such that for all sufficiently small $\delta$,
\[
\bigl|R(\theta_0+\delta,f_{2j-1})-R(\theta_0-\delta,f_{2j-1})\bigr|
\;\ge\; c_0\,\delta,
\qquad j=1,\dots,m.
\]
If $v$ and $w$ differ in $\Omega(m)$ blocks, then on the corresponding excursions their one-step-ahead states $\theta_{2j}^{(v)}$ and $\theta_{2j}^{(w)}$ differ by $2\delta$ on $\Omega(m)$ indices, forcing
\[
\bigl|R_T^{+}(P_v)-R_T^{+}(P_w)\bigr|
\gtrsim \frac{m\delta}{T}
=
\frac{1}{T}\Big(\Big\lfloor \frac{C}{2\alpha\delta}\Big\rfloor \wedge \Big\lfloor \frac{T}{2}\Big\rfloor\Big)\delta
\;\asymp\; \min\!\Big(\frac{C}{T},\,\delta\Big),
\qquad
R_T^{+}(P):=\frac1T\sum_{t=1}^{T} R(\theta_{t+1},f_t).
\]
In particular, choosing $\delta$ so that $\lfloor C/(2\alpha\delta)\rfloor \le \lfloor T/2\rfloor$ yields separation $\gtrsim C/T$.

\emph{KL control and Fano.}
Because the trajectories are deterministic and the observations are conditionally independent given $\theta_t$, the joint law factorizes and
\[
D_{\mathrm{KL}}(P_v\Vert P_w)
=
\sum_{t=1}^T D_{\mathrm{KL}}\!\bigl(p_{\theta_t^{(v)}}\Vert p_{\theta_t^{(w)}}\bigr).
\]
The trajectories differ only on the indices in disagreeing blocks. On a compact interval, regular
exponential families satisfy a uniform local quadratic KL bound, so each disagreeing block
contributes \(O(\delta^2)\) KL on the step where the parameters differ. Thus
\(D_{\mathrm{KL}}(P_v\Vert P_w)\lesssim m\delta^2\).
Choose a binary codebook \(V\subset\{-1,+1\}^m\) with \(|V|\ge 2^{\Omega(m)}\) and minimum Hamming
distance \(\Omega(m)\) \citep[Lemma~2.9]{Tsybakov2009}. Taking \(\delta\) small enough so that the
resulting family \(\{P_v\}_{v\in V}\) satisfies the standard many-hypothesis Fano testing
condition, the usual testing-to-estimation reduction yields a minimax lower bound of order
\(m\delta/T\asymp C/T\) for estimating \(R_T^+(P)\) over \(\mathcal P_C^{\mathrm{hide}}\)
\citep[Theorem~2.7 and Sec.~2.7.1]{Tsybakov2009}; cf. \citet{Yu1997}. The Fano construction gives the \(C/T\) term. The \(T^{-1/2}\) term is obtained
separately by a standard two-point Le Cam argument on a stationary subclass
\((C_T=0)\). Taking the union of the Fano drift subclass and the stationary
subclass gives the combined \(\max(T^{-1/2},C/T)\) rate.
\end{proof}

\paragraph{Interpretation.}
Theorem~\ref{thm:indistinguishability} complements the sharpness result above. Theorem~\ref{thm:lower_bound} shows that an order-\(C/T\) term can appear in \(\Delta_T^{\mathrm{rep}}\) itself. The present theorem shows that, on a related hard subclass, the corresponding one-step-ahead effect need not be identifiable from the realized prequential evidence alone. Distinct regular drift trajectories can induce one-step-ahead targets separated at order \(C/T\), while producing loss/evidence streams that are statistically difficult to distinguish.

This is not a claim that drift is never observable, or that membership in a hard class is universally untestable. Rather, it shows that the realized stream may not contain enough information to determine which drift path, and hence which one-step-ahead target, generated the data. Section~\ref{sec:observable_footprints} therefore studies the separate question of which monitoring channels make risk-relevant drift directions visible in practice. 

\subsection{Stationary and Classical Limits}
\label{subsec:reductions}

Our results recover standard learning regimes as special cases of $(d_t,\kappa_t^{(\mathcal M)})$ by nullifying one or both components of geometric motion. We emphasize the separation established above. Sampling deviation around the realized trajectory risk is controlled at the classical $T^{-1/2}$ rate, while drift enters the prequential gap through the penalty term $V_T$ in $\Delta_T^{\mathrm{rep}}\le \Delta_T^{\mathrm{sam}}+V_T$.

\begin{itemize}
\item \textbf{iid\ regime.}
If $d_t\equiv 0$ and $J_uF\equiv 0$ (so $\theta_{t+1}=\theta_t$), then the environment does not move and $V_T(f,\pi)=0$ and $\Delta_T^{\mathrm{rep}}(f,\pi)=\Delta_T^{\mathrm{sam}}(f,\pi)$, giving $\mathbb E\,\Delta_T^{\mathrm{rep}}(f,\pi)=O(T^{-1/2})$, matching classical results \citep{Vapnik1998}.

\item \textbf{Exogenous drift.}
If $J_uF\equiv 0$ but $d_t>0$, then $\kappa_t^{(\mathcal M)}\equiv 0$ and Lemma~\ref{lem:collected_bounds} yields $\mathbb E\,V_T(f,\pi)\lesssim (1/T)\sum_t \mathbb E[d_t]$, which is directly analogous in form to variation--budget penalties in nonstationary optimization \citep{Besbes2015}.

\item \textbf{Performative equilibrium.}
If both drift components vanish asymptotically (so $d_t\to 0$ and $\kappa_t^{(\mathcal M)}\to 0$), then $\mathbb E\,V_T(f,\pi)\to 0$ and the only remaining contribution is sampling deviation at rate $T^{-1/2}$. This corresponds to regimes where the coupled learner--environment system stabilizes, as in performative equilibrium analyses \citep{Perdomo2020}.

\item \textbf{Feedback-driven drift.}
If $d_t\equiv 0$ but $\kappa_t^{(\mathcal M)}>0$, then $\mathbb E\,V_T(f,\pi)\lesssim (1/T)\sum_t \mathbb E[\kappa_t^{(\mathcal M)}]$. This captures purely policy-induced instability of the data-generating process and is conceptually aligned with feedback effects studied in adaptive data analysis, though the mechanism here is an evolving population law rather than a fixed database \citep{Dwork2015,russo2016,bassily2016}.
\end{itemize}

Formally, these regimes correspond to projections of the same geometric decomposition onto axes of the drift vector $(d_t,\kappa_t^{(\mathcal M)})$.
Setting one component to zero restricts the intrinsic drift budget $C_T=\sum_t(d_t+\alpha\,\kappa_t^{(\mathcal M)})$ to the remaining axis and yields the corresponding specialization of the stability bound.

Taken together, these results identify a geometric quantity that governs learning under drift. The intrinsic drift budget $C_T$ upper bounds the total Fisher--Rao motion of the coupled learner--environment system and controls drift-induced instability through the scaling $O(C_T/T)$, while sampling deviation retains the classical $O(T^{-1/2})$ rate.

\subsection{Practical implications}

The bounds imply that realized loss curves are not self-explanatory in closed-loop systems. The same empirical trajectory can be consistent with a slow environment, a fast exogenous drift path, or policy-induced amplification through the feedback loop. What distinguishes these regimes is not the loss sequence alone but the average information speed of the data-generating law, together with its decomposition into exogenous and policy-sensitive motion. Thus $C_T/T$ identifies when prequential performance is estimation-limited and when it is drift-limited, while $(d_t,\kappa_t^{(\mathcal M)})$ separates background motion from learner-induced instability.

\section{Observable Fisher Motion}
\label{sec:observable_footprints}

Our analysis above is phrased in intrinsic terms. The realized Fisher--Rao motion
\[
\mathcal A_T ~=~ \sum_{t=1}^{T} d_F\!\bigl(p_{\theta_{t+1}},p_{\theta_t}\bigr)
\]
is the path length of the realized environment trajectory on the statistical manifold of data-generating laws, and the drift budget $C_T$ is the decomposed drift budget that controls this motion through the local comparison in Section~\ref{sec:drift_feedback}. These quantities are appealing precisely because they are intrinsic to the family $\{p_\theta\}_{\theta\in\Theta}$ and do not depend on a particular coordinate choice for $\theta$. At the same time, they are defined at the level of the full law $p_{\theta_t}$, whereas an adaptive system typically observes only a coarsened view of each sample. 

Although we state the theory on a finite-dimensional statistical manifold $\{p_\theta\}_{\theta\in\Theta}$, the role of Fisher--Rao geometry is not inherently finite-dimensional. On a regular dominated family of densities one may define the Fisher--Rao metric directly by
\[
    g_p(h_1,h_2)
    =
    \int \frac{h_1(z)h_2(z)}{p(z)}\,dz,
\]
for tangent perturbations $h_i$ with zero integral, whenever the expression is finite. The finite-dimensional parameterization is used here to keep the drift--feedback decomposition explicit and to avoid additional functional analytic conditions. The observable-channel construction below does not require the full intrinsic law to be tractably represented, but defines a Fisher geometry on the pushed-forward laws available to the monitor.

To connect intrinsic drift to what can be monitored, fix a time homogeneous Markov kernel $K$ from the data space to an observation space $\mathcal O$. It may represent a deterministic statistic $o=\psi(z)$, a randomized coarsening (e.g., quantization with noise), or more generally any fixed measurable channel producing an observed representation of $z$. Applying $K$ to a law $p$ produces the pushforward distribution $K_\# p$ on $\mathcal O$,
\[
(K_\# p)(A)
~=~
\int K(A\mid z)\,p(dz),
\qquad
A\subseteq \mathcal O \ \text{measurable}.
\]
Given the family $\{p_\theta\}$, the channel induces a corresponding family on $\mathcal O$ via $q_\theta := K_\# p_\theta$. We may therefore speak of Fisher--Rao geometry not only for the intrinsic law but also for its channel image.

Fisher geometry is monotone under such channels. In \v{C}encov's characterization, Fisher information is the unique Riemannian metric that is invariant under sufficient statistics and, more generally, contracts under Markov morphisms \citep{cencov1982,AmariNagaoka2000}. In the present setting, this gives a concrete contraction statement for Fisher--Rao distances.

\begin{proposition}[Fisher--Rao contraction under Markov kernels]
\label{prop:fr_contraction_kernel}
Let $\{p_{\theta}\}_{\theta\in\Theta}$ be a regular statistical family equipped with Fisher information metric and associated geodesic distance $d_F$. Let $K$ be a time-homogeneous Markov kernel mapping the data space to $\mathcal O$, and define $q_{\theta}:=K_\# p_{\theta}$.
Equip $\{q_{\theta}\}$ with its Fisher--Rao geodesic distance. Then for all $\theta,\theta'\in\Theta$,
\begin{equation}
\label{eq:fr_contraction_kernel}
d_F\!\bigl(q_{\theta},q_{\theta'}\bigr)
\;\le\;
d_F\!\bigl(p_{\theta},p_{\theta'}\bigr).
\end{equation}
Consequently, for any realized trajectory $\theta_1,\ldots,\theta_{T+1}$,
\begin{equation}
\label{eq:path_length_contraction_kernel}
\sum_{t=1}^{T} d_F\!\bigl(q_{\theta_{t+1}},q_{\theta_t}\bigr)
\;\le\;
\sum_{t=1}^{T} d_F\!\bigl(p_{\theta_{t+1}},p_{\theta_t}\bigr)
~=~
\mathcal A_T.
\end{equation}
\end{proposition}

\begin{proof}
Monotonicity of the Fisher information metric under Markov morphisms \citep{cencov1982,AmariNagaoka2000} implies that the tangent map induced by $p\mapsto K_\# p$ contracts Fisher norms. Writing $g^{(p)}$ and $g^{(q)}$ for the Fisher information metrics on $\{p_\theta\}$ and $\{q_\theta\}$, respectively, for any tangent vector $v$ at $p_\theta$,
\[
\|K_\# v\|_{g^{(q)}_\theta}
\;\le\;
\|v\|_{g^{(p)}_\theta}.
\]
Therefore, the Fisher--Rao length of any smooth curve in $\Theta$ cannot increase after applying $K_\#(\cdot)$. Applying this to a minimizing Fisher--Rao geodesic between $\theta$ and $\theta'$ yields \eqref{eq:fr_contraction_kernel}. Summing along the realized trajectory gives \eqref{eq:path_length_contraction_kernel}.
\end{proof}

Proposition~\ref{prop:fr_contraction_kernel} turns a fixed observation channel into an intrinsic quantity on the output space. Along the trajectory, define $q_t := K_\# p_{\theta_t}$ and the observed Fisher motion
\[
\mathcal A_T^{(K)} ~=~ \sum_{t=1}^{T} d_F(q_{t+1},q_t),
\qquad
\text{with observed Fisher rate }~\mathcal A_T^{(K)}/T.
\]
This quantity depends only on what is visible through $\mathcal O$. It is computable when the induced family is tractable, and it is estimable from windowed samples when $\mathcal O$ is low-dimensional or well-approximated by a simple parametric family. It need not recover the intrinsic drift rate. Rather, it is the Fisher rate of the pushed-forward process and can only decrease as the observation becomes coarser. This distinction is important operationally: even when the full data-generating process is too high-dimensional, implicit, or irregular for the intrinsic finite-dimensional model to be used directly, a fixed channel may induce a regular pushed-forward process on which Fisher motion is still well defined.

Combining Proposition~\ref{prop:fr_contraction_kernel} with the intrinsic bound $\mathcal A_T \le C_T+\sum_t\epsilon_t$ yields an immediate link between observable rates and the drift budget.

\begin{corollary}[Observed Fisher motion is bounded by the intrinsic drift budget]
\label{cor:observable_footprint_budget}
Fix a Markov kernel $K$ and define $q_t := K_\# p_{\theta_t}$ along the trajectory. Let
\[
\mathcal A_T^{(K)}
~:=~
\sum_{t=1}^{T} d_F(q_{t+1},q_t).
\]
Then $\mathcal A_T^{(K)} \le \mathcal A_T$. Moreover, under the drift--feedback decomposition of Section~\ref{sec:drift_feedback},
\begin{equation}
\label{eq:observable_footprint_budget}
\frac{\mathcal A_T^{(K)}}{T}
\;\le\;
\frac{\mathcal A_T}{T}
\;\le\;
\frac{C_T}{T} + \frac{1}{T}\sum_{t=1}^T \epsilon_t,
\end{equation}
where $C_T$ is the intrinsic drift budget and $\epsilon_t$ is the second-order remainder term from \eqref{eq:one_step_motion}--\eqref{eq:budget_relation}.
\end{corollary}

\begin{proof}
The inequality $\mathcal A_T^{(K)}\le \mathcal A_T$ follows from \eqref{eq:path_length_contraction_kernel}. The bound on $\mathcal A_T$ is \eqref{eq:budget_relation}. Dividing by $T$ gives \eqref{eq:observable_footprint_budget}.
\end{proof}

The content of Corollary~\ref{cor:observable_footprint_budget} is straightforward. Any fixed channel induces an information speed on its output space. This speed is measured in Fisher units, is additive over time, and is dominated by the intrinsic drift rate up to the second-order term. A large observed Fisher rate therefore provides evidence that the realized trajectory is not moving slowly in the intrinsic geometry, whereas a small observed rate indicates only that the chosen channel does not reveal substantial Fisher motion on the timescale being monitored.

Finally, our emphasis on Fisher--Rao motion should not be read as a rejection of drift diagnostics based on standard divergences. Many monitoring approaches quantify change via KL, total variation, Wasserstein distance, or related scores between successive windows of model outputs, learned representations, features, or error statistics. Such diagnostics are useful generic change detectors, but they usually do not specify which representation preserves the directions of motion that matter for one-step risk. The present theory gives a geometric interpretation of this issue. When changes are incremental, KL and other smooth divergences admit local quadratic approximations governed by Fisher information, so divergence-based monitors can track projected Fisher motion when the monitored representation is well chosen. At the same time, this connection is representation-dependent. Under any fixed coarse-graining the induced divergences can only decrease, so an observed divergence that remains small may reflect limited sensitivity of the monitored representation rather than genuine stationarity.

The empirical question is therefore not only whether a monitored representation changes, or whether a channel contracts Fisher motion, but whether the contracted motion remains aligned with the risk shifts that enter the prequential decomposition. In this sense, monitoring is a channel-design problem rather than a search for a universal drift score. Section~\ref{subsec:misspecified-observable-feedback} tests this point in a misspecified feedback setting. There, even when the variables entering the feedback perturbation are known, the most informative observable channel for \(v_t\) and \(V_T\) is not obtained simply by retaining all of them. This illustrates that channel design is part of the observability problem rather than a mechanical consequence of access to more coordinates.

\section{Experimental Validation}
\label{sec:experiments}

This section validates the empirical quantities that organize our theory: the prequential gap, its sampling--drift decomposition, the additive drift--feedback budget, and observable Fisher motion under fixed monitoring channels. We begin with a linear--Gaussian environment where all terms are available in closed form. We then move to a nonlinear teacher--learner system with closed-loop feedback and ask whether the same additive structure continues to organize $\Delta_T^{\mathrm{rep}}$ beyond the analytically tractable regime. Next, we isolate Fisher contraction under known Gaussian monitoring channels. Finally, we turn to a misspecified feedback experiment on an evolving empirical population, where the full law is not available, and examine whether crude observable channels retain a risk-relevant Fisher signature of the one-step shifts $v_t$ and $V_T$.

\subsection{Linear--Gaussian demonstration: budgeted drift component and horizon scaling}
\label{sec:gaussian_validation}

We include a closed--form Gaussian model to instantiate the primitives appearing in the decomposition.  We consider Gaussian mean estimation with fixed covariance $\Sigma$ and a drifting environment state $\theta_t\in\mathbb{R}^d$.  At time $t$ we observe $x_t\sim\mathcal{N}(\theta_t,\Sigma)$ and deploy the predictor $f_t=\hat\mu_{t-1}$, where $\hat\mu_t$ is the online sample mean.

Our target quantity is the one--step--ahead prequential gap $\Delta_T^{\mathrm{rep}}=|\widehat R_T-R_T^+|$, together with the decomposition $\Delta_T^{\mathrm{rep}}\le \Delta_T^{\mathrm{sam}}+V_T$ from Section~\ref{sec:setup}. In this setting the population risk is $R(\theta,f)=\operatorname{tr}(\Sigma)+\|\theta-f\|_2^2$, so the above terms can be computed exactly. In this model, nothing is hidden.

\paragraph{Budgeted scaling of the drift component.}
To isolate the drift contribution, we examine how $V_T$ scales with the Fisher--consistent motion budget.  The environment evolves via an exogenous Fisher--budgeted increment plus an endogenous feedback component, yielding four regimes (iid, exogenous only, endogenous only, mixed).  For each run we record
\[
\bar d_T:=\frac{1}{T}\sum_{t=1}^T \|v^{\mathrm{exo}}_t\|_{\Sigma^{-1}},
\qquad
\bar\kappa_T:=\frac{1}{T}\sum_{t=1}^T \|\gamma u_t\|_{\Sigma^{-1}},
\qquad
\frac{C_T}{T}:=\bar d_T+\alpha\,\bar\kappa_T,
\]
and fit $\alpha^\star$ by linear regression. This directly tests the predicted additive organization of the drift term. Figure~\ref{fig:gaussian_two_panel}(a) shows a tight collapse of $V_T$ as an approximately linear function of $C_T/T$ across regimes (fit: $\alpha^\star\approx 2.89$, $R^2\approx 0.97$), confirming the predicted additivity of exogenous and feedback contributions in this closed--form regime.

\paragraph{Horizon scaling of sampling vs.\ drift.}
We next illustrate how the decomposition separates horizon effects.  Fixing a persistent bounded drift level (here $\delta_F=0.1$), we plot the empirical sampling term $\Delta_T^{\mathrm{sam}}$ together with $V_T$ as functions of $T$. Figure~\ref{fig:gaussian_two_panel}(b) shows that $\Delta_T^{\mathrm{sam}}$ decreases with $T$, whereas $V_T$ remains approximately stable, so the dominant contribution transitions from sampling- driven to drift--driven as the horizon grows.

\begin{figure}[t]
  \centering
  \begin{subfigure}[t]{0.49\linewidth}
    \centering
    \includegraphics[width=\linewidth]{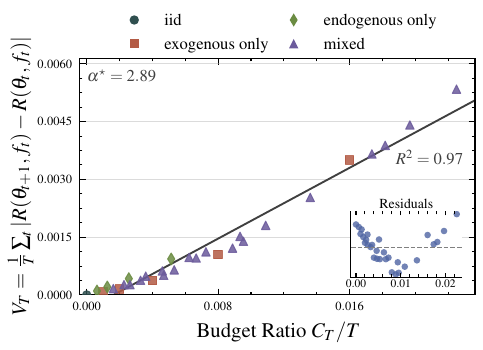}
    \caption{Drift component $V_T$ vs.\ budget ratio $C_T/T$
    (fit gives $\alpha^\star\approx 2.89$, $R^2\approx 0.97$).}
    \label{fig:gaussian_additivity}
  \end{subfigure}\hfill
  \begin{subfigure}[t]{0.49\linewidth}
    \centering
    \includegraphics[width=\linewidth]{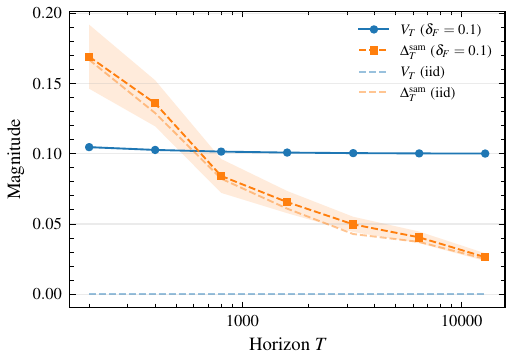}
    \caption{Component scaling vs.\ horizon $T$:
    $\Delta_T^{\mathrm{sam}}$ decays while $V_T$ is approximately constant
    (illustrated at $\delta_F=0.1$ with iid reference).}
    \label{fig:gaussian_components}
  \end{subfigure}
  \caption{Linear--Gaussian demonstrations for the decomposition
  $\Delta_T^{\mathrm{rep}}\le \Delta_T^{\mathrm{sam}}+V_T$.
  (a) The drift component $V_T$ scales linearly with the Fisher--consistent
  budget ratio $C_T/T$ across iid, exogenous, endogenous, and mixed regimes.
  (b) As $T$ increases, the sampling term decreases while the drift component
  remains stable, illustrating the separation of sampling and drift effects.}
  \label{fig:gaussian_two_panel}
\end{figure}

\subsection{Nonlinear neural-network validation: teacher--learner drift}
\label{sec:nn_stress}

We test the same structure in a nonlinear teacher--learner environment in which the learner is trained online and the environment state evolves through closed-loop feedback. At time $t$, covariates are sampled as
$x_t\sim \mathcal N(0,I_{d})$ and the teacher generates labels
$y_t=\langle \phi(x_t),\theta_t\rangle+\varepsilon_t$ with
$\varepsilon_t\sim\mathcal N(0,\sigma^2)$, where $\phi$ is a fixed nonlinear feature map and $\theta_t\in\mathbb R^{m}$ is the environment state. The learner is a two-layer MLP $f_t$ updated by streaming SGD on $(x_t,y_t)$. The environment update combines an exogenous Fisher--budgeted increment with an endogenous component that moves $\theta_t$ along a clipped disagreement-gradient direction between student predictions and the teacher rule. We report results over $12$ seeds with horizons
$T\in\{800,1600,3200,6400\}$, exogenous ratios
$C_{\mathrm{exo}}/T\in\{2.5,5,10,20,40,80\}\times 10^{-3}$, and feedback strengths
$\gamma\in\{0,0.0025,0.005,0.01,0.02\}$, using input dimension $d=5$, feature dimension $m=64$ (i.e., $\phi:\mathbb R^d\to\mathbb R^m$), hidden width $128$, and noise level $\sigma=0.1$.

\paragraph{Evaluation.}
Population MSE $R(\theta,f)$ is approximated by Monte Carlo evaluation on fresh draws from the teacher distribution. When evaluation is subsampled along a nonstationary trajectory, $\widehat R_T$ and $R_T^+$ must be computed on a common evaluation schedule; we enforce this by recording both quantities at identical evaluation times, including $t=T$.

\paragraph{Dependence on exogenous drift and feedback strength.}
Figure~\ref{fig:nn_panels} (left) plots $\Delta_T^{\mathrm{rep}}$ versus $C_{\mathrm{exo}}/T$ for each horizon, with separate curves for $\gamma$. Across horizons, $\Delta_T^{\mathrm{rep}}$ increases with $C_{\mathrm{exo}}/T$, and larger $\gamma$ generally yields a larger gap at fixed $C_{\mathrm{exo}}/T$. Thus the feedback channel acts as an additional source of distributional motion that can amplify the prequential gap in this nonlinear closed-loop setting.

\paragraph{Budget collapse with held-out horizon.}
We form the combined proxy
\[
\frac{C_T}{T}
\;:=\;
\frac{\sum_{t=1}^T d_t + \alpha \sum_{t=1}^T \kappa_t}{T},
\]
and estimate $\alpha$ by fitting the additivity plane
\[
\Delta_T^{\mathrm{rep}}
~\approx~
b_0 + b_s T^{-1/2}
+ b_1\Bigl(\tfrac{\sum_t d_t}{T}\Bigr)
+ b_2\Bigl(\tfrac{\sum_t \kappa_t}{T}\Bigr)
\]
on all horizons except a held-out $T_{\mathrm{hold}}=6400$, setting $\alpha=b_2/b_1$. In this experiment the fit yields $\alpha\simeq 2.67$ and achieves $R^2\simeq 0.81$ for the plane model on $\Delta_T^{\mathrm{rep}}$. Figure~\ref{fig:nn_panels} (right) plots $\Delta_T^{\mathrm{rep}}$ versus $C_T/T$ within each horizon using this single value of $\alpha$, including the held-out horizon. The within-horizon relationships are well approximated by a linear dependence on $C_T/T$, supporting the predicted additive organization of $\Delta_T^{\mathrm{rep}}$ by the combined Fisher-motion budget. The corresponding observed-versus-predicted plane diagnostic is reported in Appendix~\ref{app:exp-results}.

\begin{figure}[t]
  \centering
  \includegraphics[width=0.48\linewidth]{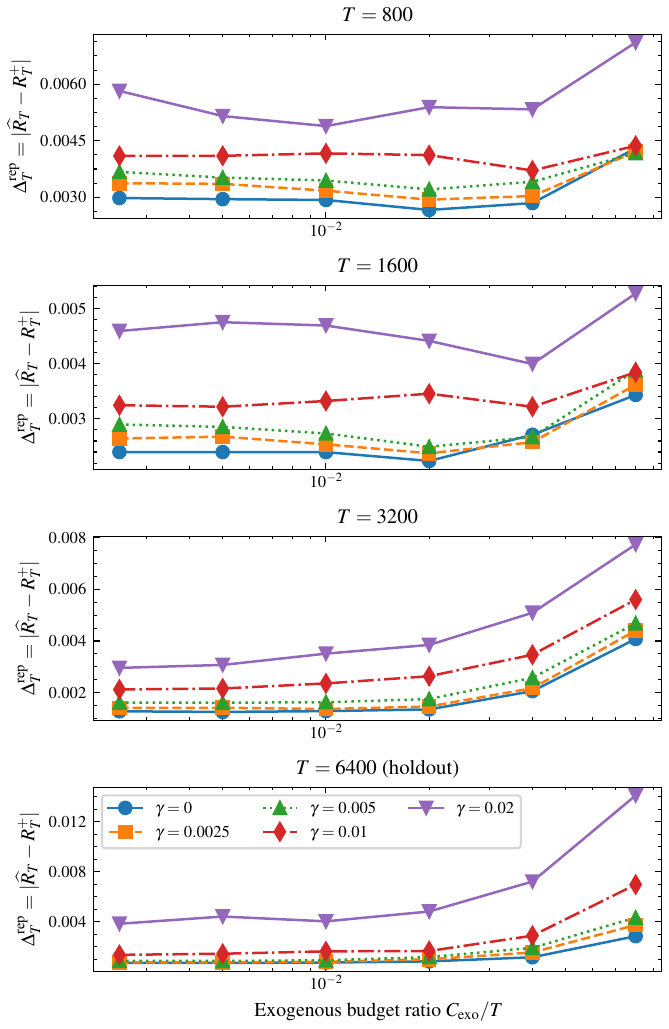}
  \hfill
  \includegraphics[width=0.48\linewidth]{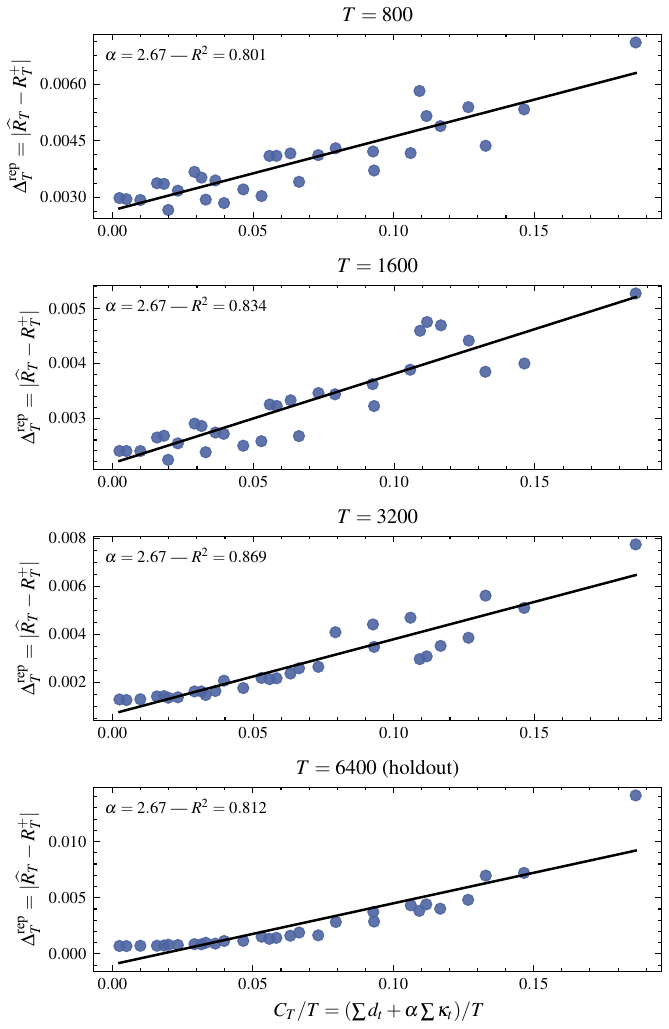}
  \caption{Neural-network teacher--learner validation.
  \emph{Left:} $\Delta_T^{\mathrm{rep}}=|\widehat R_T-R_T^+|$ versus the exogenous
  ratio $C_{\mathrm{exo}}/T$ for horizons $T\in\{800,1600,3200,6400\}$ and
  feedback strengths $\gamma$.
  \emph{Right:} Within-horizon collapse versus
  $C_T/T=(\sum_t d_t+\alpha\sum_t\kappa_t)/T$, where $\alpha$ is fitted on
  $T\neq 6400$ and evaluated on the held-out horizon.}
  \label{fig:nn_panels}
\end{figure}

\subsection{Closed-form check: observable Fisher contraction}
\label{subsec:closed-form-observable-fisher}

After testing the intrinsic drift--feedback decomposition, we turn to the observability question of Section~\ref{sec:observable_footprints}. We first isolate that question in a setting where both intrinsic and observed Fisher motion are available in closed form. Let \(P_t=N(\theta_t,\Sigma)\) evolve under the mixed exogenous/endogenous Gaussian drift--feedback dynamics above. Then
\[
    d_F(P_{t+1},P_t)=\|\theta_{t+1}-\theta_t\|_{\Sigma^{-1}} .
\]
We observe the system through a fixed linear Gaussian channel
\[
    o = Bx+\xi,\qquad \xi\sim N(0,\sigma_K^2 I_k).
\]
The induced law remains Gaussian,
\[
    Q_t=K_\sharp P_t
    =
    N(B\theta_t,\;B\Sigma B^\top+\sigma_K^2 I_k),
\]
so the channel Fisher step is
\[
    d_F(Q_{t+1},Q_t)
    =
    \|B(\theta_{t+1}-\theta_t)\|_{(B\Sigma B^\top+\sigma_K^2 I_k)^{-1}} .
\]

\begin{figure}[t]
    \centering

    \begin{subfigure}[t]{0.49\textwidth}
        \centering
        \includegraphics[width=\textwidth]{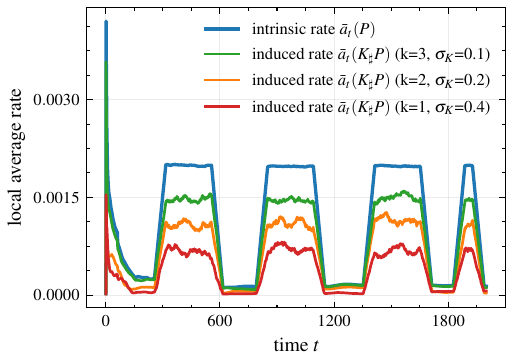}
        \caption{}
        \label{fig:observable-fisher-gaussian-rate}
    \end{subfigure}
    \hfill
    \begin{subfigure}[t]{0.49\textwidth}
        \centering
        \includegraphics[width=\textwidth]{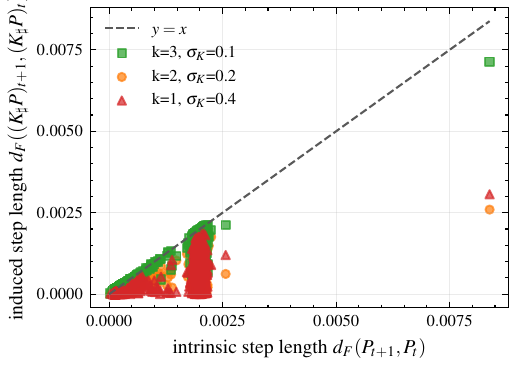}
        \caption{}
        \label{fig:observable-fisher-gaussian-contraction}
    \end{subfigure}

    \caption{
    Closed-form observable Fisher contraction.
    (a) Intrinsic local Fisher rate and induced rates under three fixed linear Gaussian monitoring channels. The exogenous drift budget is allocated in bursts, changing the local speed while keeping the total budget fixed.
    (b) Stepwise contraction under the same channels: each induced step length satisfies \(d_F(Q_{t+1},Q_t)\le d_F(P_{t+1},P_t)\). The observed rate therefore tracks temporal changes in drift speed, but only as a contracted, channel-dependent projection of the intrinsic motion.
    }
    \label{fig:observable-fisher-gaussian}
\end{figure}

Figure~\ref{fig:observable-fisher-gaussian} illustrates the two features most relevant to the observable-rate construction. First, the channel step lengths lie below the intrinsic step lengths, visualizing Fisher--Rao contraction under a fixed kernel. Second, when the same total exogenous budget is allocated in bursts, the trailing-window channel rates track the high- and low-speed intervals while remaining uniformly smaller. Thus the observed rate is not an estimate of the intrinsic rate; it is the Fisher rate of the pushed-forward process, with the amount of visible motion determined by the channel.

The next experiment asks the related question: when the intrinsic law is not available and the channel is deliberately crude, does the induced Fisher rate still preserve motion that is relevant to the deployed predictor's one-step risk?

\subsection{Misspecified real-data feedback: observable Fisher motion retains risk-relevant signal}
\label{subsec:misspecified-observable-feedback}

The preceding monitoring-channel experiment is closed form. Both the intrinsic Fisher--Rao motion and the channel-induced motion are available analytically. We next consider a less idealized setting in which the environment is represented by an evolving empirical population and Fisher motion is observed only through fixed low-dimensional summaries.

We use the public \emph{US Regional Sales Data} dataset from Kaggle~\citep{talhabu_us_regional_sales} and take unit price as the prediction target. Before feedback begins, a ridge predictor is fit on an initial split, so the closed-loop process does not start from an untrained model. At each round, the deployed predictor $f_t$ is evaluated on the current empirical population, and its predictions induce a nonlinear perturbation of the next population. The intervention is intentionally misspecified relative to the predictor class: it acts through a bounded nonlinear response of centered predictions and changes the joint empirical law over price, quantity, cost, and sales-channel subgroup. Seeded trajectory-level perturbations are included in addition to row-level noise, so repeated runs produce distinct feedback trajectories while preserving the same intervention structure.

The timing matches the prequential decomposition. We deploy $f_t$, move the environment, compare the same fixed $f_t$ across the transition, and only then update the learner. For each transition we compute
\[
    v_t =
    \left|
        R_{t+1}(f_t)-R_t(f_t)
    \right|,
    \qquad
    V_T = \frac{1}{T}\sum_t v_t ,
\]
together with the prequential gap $\Delta_T^{\mathrm{rep}}$. Here $R_t(f_t)$ denotes empirical-population risk of the fixed deployed predictor on the population at round $t$.

To compute observable Fisher motion, we apply fixed categorical monitoring channels. Each channel maps every row of the empirical population to a discrete state by binning selected observable coordinates, producing a histogram $\hat q_t$ on a finite probability simplex. For successive populations, the observed Fisher step is the Fisher--Rao distance between categorical laws,
\[
    d_F(\hat q_{t+1},\hat q_t)
    =
    2\arccos\!\left(
        \sum_j \sqrt{\hat q_{t+1,j}\hat q_{t,j}}
    \right),
\]
with a small pseudocount used only to avoid empty-bin degeneracies. Thus the experiment does not estimate Fisher motion of the full empirical law. It computes the Fisher motion of the empirical process after projection through a fixed monitoring channel.

We compare a null blind channel, a coarse score channel, and task-relevant channels. The null blind channel is a fixed row-partition control containing no prediction, target, or task information. The coarse score channel observes only bins of the deployed prediction score, representing output-only monitoring. The task-aligned channel bins the prediction score together with order quantity, unit cost, and sales-channel subgroup; ablated task channels omit one of these coordinates at a time. The goal is not to benchmark generic drift detection, but to test whether simple fixed channels preserve Fisher motion aligned with the transition-level risk shifts \(v_t\) that enter the prequential decomposition.

\begin{figure}[t]
    \centering
    \includegraphics[width=\textwidth]{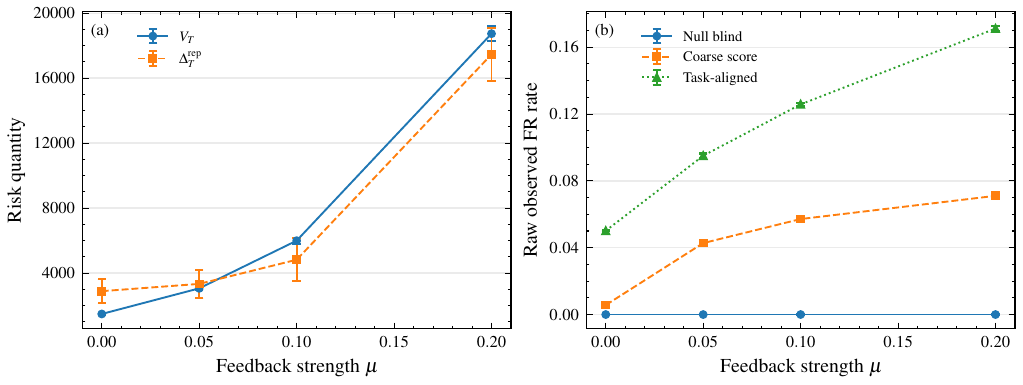}
    \caption{
    Misspecified feedback intervention. Left: increasing feedback strength $\mu$ increases the drift penalty $V_T$ and the prequential gap $\Delta_T^{\mathrm{rep}}$. Right: the same intervention increases raw observed Fisher rates for non-null monitoring channels, while the null blind channel remains flat. This is a manipulation check: the nonlinear feedback perturbation moves the empirical population in ways that affect one-step risk and are visible through suitable monitoring channels.
    }
    \label{fig:regional-feedback-manipulation}
\end{figure}

Figure~\ref{fig:regional-feedback-manipulation} verifies that the intervention moves the system. As $\mu$ increases, both $V_T$ and $\Delta_T^{\mathrm{rep}}$ increase, and raw observed Fisher rates increase for non-null channels. The null blind channel remains at zero because its fixed row partition does not change across rounds.

\begin{figure}[t]
    \centering
    \includegraphics[width=\textwidth]{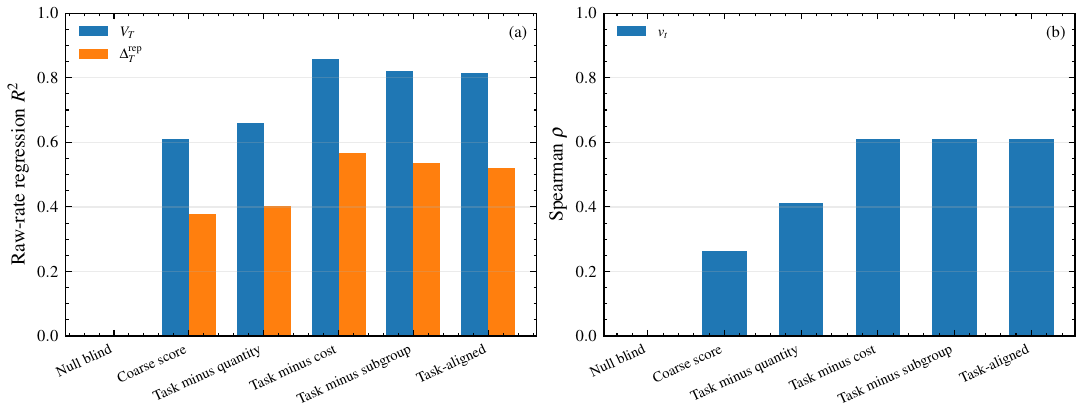}
    \caption{
    Channel dependence of observable Fisher motion under misspecified feedback. Left: run-level raw-rate regression $R^2$ for $V_T$ and $\Delta_T^{\mathrm{rep}}$. Right: transition-level Spearman association between per-step observed Fisher motion and $v_t$ over positive-feedback transitions. The null blind channel is a fixed row-partition negative control, while the coarse score channel is an output-level monitor. Task-relevant channels are substantially more informative, especially at the transition level. The reported rates use fixed categorical bins and are not calibrated estimators of intrinsic drift or run-level degradation.
    }
    \label{fig:regional-channel-association}
\end{figure}

Figure~\ref{fig:regional-channel-association} reports the main channel diagnostics. At the run level, raw observed Fisher rates are more strongly associated with the drift penalty $V_T$ than with the prequential gap $\Delta_T^{\mathrm{rep}}$, as expected from $\Delta_T^{\mathrm{rep}}\leq \Delta_T^{\mathrm{sam}}+V_T$. The strongest association with $V_T$ occurs for the task-minus-cost channel ($R^2\approx 0.85$), with task-minus-subgroup and task-aligned close behind ($R^2\approx 0.80$). Associations with $\Delta_T^{\mathrm{rep}}$ are weaker, reaching about $R^2\approx 0.52$--$0.56$ for the strongest task channels.

The transition-level analysis is the most direct test of risk-relevant observability. Over positive-feedback transitions, the coarse score channel has only moderate association with $v_t$ ($\rho\approx 0.27$), while task-relevant channels are substantially stronger: task-minus-cost, task-minus-subgroup, and task-aligned achieve $\rho\approx 0.62$--$0.64$. Thus monitoring only model outputs preserves some feedback-induced motion, but it is substantially less informative about one-step risk shifts than channels that retain task-relevant covariate structure. The observed Fisher signal is therefore not merely a consequence of increasing feedback strength. Channel choice determines how much local risk-relevant motion is preserved.

The ordering of the task channels is informative but requires interpretation. Retaining every feedback-related coordinate is not uniformly best: task-minus-cost gives the strongest run-level association with $V_T$, while the full task-aligned channel is strongest at the transition level. This reflects that the variables used to generate the feedback perturbation need not coincide with the directions through which the deployed predictor's risk is most sensitive.

Finally, it is important to note that the raw categorical rate is not a calibrated estimator of $C_T/T$, $V_T$, or run-level degradation. The bins are simple, fixed in advance, and not optimized for predictive performance. The conclusion is narrower: even under nonlinear misspecified feedback, unoptimized monitoring channels can retain a contracted but risk-relevant observable Fisher signal.

\section{Discussion and Outlook}
\label{sec:discussion}

The results in this paper recast learning under drift as a question of information speed. In stationary learning, additional samples reduce uncertainty about a fixed population quantity. In closed-loop learning, the population quantity itself moves. The relevant limitation is therefore not sample size alone, but whether the data-generating process moves slowly enough, in Fisher--Rao
geometry and in risk-relevant directions, for realized performance to remain
predictive of one-step-ahead performance. This is the sense in which prequential reproducibility is a finite resource: when the average motion rate $C_T/T$ is nonnegligible, part of the error is imposed by the trajectory rather than by estimation.

The rate perspective also changes how adaptation should be interpreted. A learner that reacts quickly to new data is not automatically more reproducible; its policy may also increase the motion of the environment it is trying to track. Conversely, a learner that appears stable in realized loss may still be operating in a high-drift regime if the observed losses are insensitive to the directions in which the data-generating law is moving. The decomposition into exogenous and policy-sensitive motion is useful precisely because it separates a moving target from a target made to move by the learner's own actions.

The main practical limitation is observability. The intrinsic budget $C_T$ is defined at the level of the data-generating law, while deployed systems usually observe only summaries, logs, predictions, or monitoring features. Section~\ref{sec:observable_footprints} shows that any fixed channel induces its own Fisher--Rao motion, and that this observed motion contracts relative to intrinsic motion. Thus observed Fisher motion should not be interpreted as a calibrated estimate of $C_T/T$. It is the information-geometric rate of the process as seen through a particular representation. Therefore, the intrinsic theorem and the observable-channel construction play different roles: the former gives a sharp bound when the full trajectory lies in a regular Fisher--Rao regime, while the latter can still provide an operational signal when only a regular projected law is available.

Monitoring therefore becomes a channel-design problem rather than a search for a universal drift detector. The goal is to choose representations whose contracted Fisher motion remains informative about one-step risk variation. This is the operational counterpart of the indistinguishability result in Section~\ref{subsec:lower_bound}: realized prequential evidence alone need not identify the drift effect that matters for the one-step-ahead target. The misspecified feedback experiment illustrates this projected version of the same issue. Output-only monitoring retained some signal, while task-relevant channels were more informative; however, richer channels were not uniformly better, since additional coordinates can introduce sparsity, nuisance variation, or cancellation. Thus channel design and calibration are separate operational layers, to be developed through replay, simulation, held-out reference regimes, or deployment-specific calibration curves relating observed channel motion to \(V_T\) and transition-level risk shifts.

The theory is intentionally local, but the locality is distributional rather than pathwise in the realized states. Individual observations, rewards, or states may change abruptly; what the bounds require is that the induced laws along the analyzed segment remain in a regular region where Fisher--Rao motion controls population-risk change and where the policy-sensitive response admits a controlled local comparison. Regime switches, support changes, or other nonlocal changes fall outside this local reproducibility regime unless treated piecewise or through an appropriate monitoring channel.

Several extensions are immediate. One is to characterize families of monitoring channels that are sufficient for risk-relevant drift, rather than merely contracted. Another is to develop adaptive or learned channels while preserving a fixed enough reference frame for Fisher-rate comparisons to remain meaningful. A third is to connect the policy-sensitive term to algorithm design. If a learner can choose among actions with different induced Fisher motion, then reproducibility becomes not only an evaluation property but a control objective. For adaptive systems, evaluation should therefore include not only current-stream loss, but also the rate at which the learner--environment loop spends Fisher--Rao motion relevant to future one-step performance.

\section*{Declarations}

\noindent\textbf{Funding.} No external funding supported this work.

\medskip
\noindent\textbf{Conflict of interest.} The author declares no competing interests.

\medskip
\noindent\textbf{Code availability.} Source code and figure-generation scripts for all experiments are available at
\url{https://github.com/fiazaich/Learning-under-Drift}.

\medskip
\noindent\textbf{Data availability.}
The \emph{US Regional Sales Data} dataset used in Section~\ref{subsec:misspecified-observable-feedback} is publicly available on Kaggle~\citep{talhabu_us_regional_sales}. The processed data, preprocessing scripts, experiment code, and figure-generation scripts are available in the accompanying repository.

\appendix

\section{Proofs}
\label{app:prelim_proofs}

For completeness we include proofs of the primary lemmas, theorems and equations. Throughout, constants may depend on the regularity and bounded-geometry constants from Assumption~\ref{assump:regularity}, but not on $T$ or on particular parameter values within compact subsets of $\Theta$.

\subsection{Proof of Lemma~\ref{lem:local_equivalence}
(Local geodesic equivalence)}\label{app:proof_local_equivalence}

\begin{proof}
Fix $\theta\in K$ and let $v\in T_\theta\Theta$ satisfy $\|v\|_{g_\theta}\le r_{\mathrm{inj}}$.
Let
\[
\gamma:[0,1]\to\Theta, \qquad \gamma(s):=\operatorname{Exp}_\theta(sv).
\]
Then $\gamma$ is a geodesic with $\gamma(0)=\theta$ and $\gamma(1)=\operatorname{Exp}_\theta(v)$ by definition of the exponential map.

\paragraph{Minimizing property inside the injectivity radius.}
By the definition of the injectivity radius at $\theta$, the restriction of the exponential map $\operatorname{Exp}_\theta$ to the open ball $B_{r_{\mathrm{inj}}}(0)\subset T_\theta\Theta$ is a diffeomorphism onto its image, and every point in $\operatorname{Exp}_\theta(B_{r_{\mathrm{inj}}}(0))$ is joined to $\theta$ by a \emph{unique} minimizing geodesic (see, e.g., standard consequences of the injectivity-radius definition; cf.\ \citet[Prop.~6.10]{Lee97}). Since $\|v\|_{g_\theta}\le r_{\mathrm{inj}}$, the endpoint $\operatorname{Exp}_\theta(v)$ lies in this normal neighborhood, hence $\gamma$ is the (unique) minimizing geodesic from $\theta$ to $\operatorname{Exp}_\theta(v)$ on $[0,1]$. In particular,
\[
d_F\bigl(\theta,\operatorname{Exp}_\theta(v)\bigr)=L(\gamma),
\]
where $L(\gamma)$ denotes the Riemannian length of $\gamma$.

\paragraph{Constant speed and length computation.}
Because $\gamma$ is a geodesic, its speed is constant:
$s\mapsto \|\dot\gamma(s)\|_{g_{\gamma(s)}}$ is constant on $[0,1]$.
Moreover, $\dot\gamma(0)=v$ (differentiate $\operatorname{Exp}_\theta(sv)$ at $s=0$), so the constant speed equals $\|v\|_{g_\theta}$:
\[
\|\dot\gamma(s)\|_{g_{\gamma(s)}}=\|\dot\gamma(0)\|_{g_\theta}=\|v\|_{g_\theta}
\qquad\text{for all } s\in[0,1].
\]
Therefore the length is
\[
L(\gamma)=\int_0^1 \|\dot\gamma(s)\|_{g_{\gamma(s)}}\,ds
= \int_0^1 \|v\|_{g_\theta}\,ds
= \|v\|_{g_\theta}.
\]

Combining the minimizing property with the length computation yields
\[
d_F\bigl(\theta,\operatorname{Exp}_\theta(v)\bigr)=L(\gamma)=\|v\|_{g_\theta},
\]
which is exactly~\eqref{eq:local_equiv}.
\end{proof}

\subsection{Proofs of Lemma~\ref{lem:subgaussian_martingale}
and Corollary~\ref{cor:subgaussian_expectation}
(Sub-Gaussian martingale deviation and expected deviation)}
\label{app:proof_lem_subg_martingale_cor_subg_exp}

The following is the standard Chernoff/mgf argument for conditionally
sub-Gaussian martingale differences; see, e.g.,
\citet[Section~2.2.2]{Wainwright2019}. We include the short derivation of the
expected-deviation bound because the constant is used in Theorem~\ref{thm:main_rep}.

\begin{proof}[Proof of Lemma~\ref{lem:subgaussian_martingale}]
Let \(S_T=\sum_{t=1}^T Z_t\) and \(\mathcal V_T=\sum_{t=1}^T\sigma_t^2\).
Iterating the conditional mgf bound gives, for every \(\lambda\in\mathbb R\),
\[
\mathbb E e^{\lambda S_T}
\le
\exp\!\left(\frac{\lambda^2\mathcal V_T}{2}\right).
\]
Hence, for \(\eta>0\) and \(\lambda>0\),
\[
\Pr(S_T\ge \eta)
\le
\exp\!\left(-\lambda\eta+\frac{\lambda^2\mathcal V_T}{2}\right).
\]
Optimizing at \(\lambda=\eta/\mathcal V_T\) yields
\[
\Pr(S_T\ge \eta)
\le
\exp\!\left(-\frac{\eta^2}{2\mathcal V_T}\right).
\]
Applying the same bound to \(-S_T\) gives
\[
\Pr(|S_T|\ge \eta)
\le
2\exp\!\left(-\frac{\eta^2}{2\mathcal V_T}\right).
\]
\end{proof}

\begin{proof}[Proof of Corollary~\ref{cor:subgaussian_expectation}]
Using the tail-integral identity and Lemma~\ref{lem:subgaussian_martingale},
\[
\mathbb E|S_T|
=
\int_0^\infty \Pr(|S_T|\ge \eta)\,d\eta
\le
\int_0^\infty
2\exp\!\left(-\frac{\eta^2}{2\mathcal V_T}\right)d\eta
=
\sqrt{2\pi\mathcal V_T}.
\]
Therefore
\[
\mathbb E\left|\frac1T\sum_{t=1}^T Z_t\right|
\le
\frac{\sqrt{2\pi\mathcal V_T}}{T}.
\]
If \(\sigma_t^2\le\sigma^2\) for all \(t\), then
\(\mathcal V_T\le\sigma^2T\), and hence
\[
\mathbb E\left|\frac1T\sum_{t=1}^T Z_t\right|
\le
\frac{\sqrt{2\pi}\sigma}{\sqrt T}.
\]
\end{proof}
\subsection{Proof of Equation~\ref{eq:budget_relation} (Information path bound)}\label{app:proof_eq_budget_relation}

\begin{proof}
Recall the discrete Fisher--Rao path length
\[
\mathcal A_T
~:=~
\sum_{t=1}^T d_F(\theta_{t+1},\theta_t).
\]
Work pathwise.

\paragraph{Triangle split into exogenous and controlled components.}
For each $t$, insert the baseline next state $F(\theta_t,0,\eta_t)$ and apply the triangle inequality:
\[
d_F(\theta_{t+1},\theta_t)
\le
d_F\!\big(\theta_{t+1},F(\theta_t,0,\eta_t)\big)
+
d_F\!\big(F(\theta_t,0,\eta_t),\theta_t\big).
\]
By definition of the exogenous drift magnitude (Section~\ref{sec:setup}),
\[
d_t
:= d_F\!\big(F(\theta_t,0,\eta_t),\theta_t\big),
\]
so it remains to control the first term.

\paragraph{Control the policy-sensitive displacement.}
By the local comparison assumption on $K$ (Equation~\eqref{eq:local-control-distance}), there exists $\alpha>0$ such that
\[
d_F\!\big(\theta_{t+1},F(\theta_t,0,\eta_t)\big)
\le
\alpha\,\kappa_t^{(\mathcal M)} + \epsilon_t,
\]
where $\kappa_t^{(\mathcal M)}$ is defined in \eqref{eq:kappa-grad} and $\epsilon_t:=\|r_t\|_{g_{F(\theta_t,0,\eta_t)}}$ satisfies the quadratic bound $\epsilon_t\le c\|u_t\|^2$ by \eqref{eq:eps_bound}.

Combining the previous two results gives the one-step motion bound
\[
d_F(\theta_{t+1},\theta_t)
\le
d_t + \alpha\,\kappa_t^{(\mathcal M)} + \epsilon_t.
\]

\paragraph{Sum over time.}
Summing over $t=1,\dots,T$ yields
\[
\mathcal A_T
=
\sum_{t=1}^T d_F(\theta_{t+1},\theta_t)
\le
\sum_{t=1}^T\bigl(d_t+\alpha\,\kappa_t^{(\mathcal M)}+\epsilon_t\bigr)
=
C_T + \sum_{t=1}^T \epsilon_t,
\]
where $C_T:=\sum_{t=1}^T (d_t+\alpha\,\kappa_t^{(\mathcal M)})$ is the intrinsic drift budget. The remainder bound $\epsilon_t\le c\|u_t\|^2$ follows from
\eqref{eq:eps_bound}.
\end{proof}

\subsection{Proof of Lemma~\ref{lem:VT_budget} (Summed drift penalty)}
\label{app:proof_lem_VT_budget}

\begin{proof}
Recall
\[
V_T(f,\pi)
:=\frac1T\sum_{t=1}^{T}\bigl|R(\theta_{t+1},f_t)-R(\theta_t,f_t)\bigr|.
\]
Fix any learner $(f,\pi)$ generating the trajectory $\{(\theta_t,u_t)\}_{t=1}^T$.

\paragraph{One-step risk motion bound.}
For each $t$, Assumption~\ref{assump:risk_lipschitz} applied with $\theta=\theta_t$ and $\theta'=\theta_{t+1}$ (and predictor $f_t$ fixed) yields the pathwise inequality
\begin{equation}
\label{eq:drift_feedback_step}
\bigl|R(\theta_{t+1},f_t)-R(\theta_t,f_t)\bigr|
\;\le\;
L_p\, d_F(\theta_{t+1},\theta_t).
\end{equation}
Summing \eqref{eq:drift_feedback_step} over $t=1,\dots,T$ and dividing by $T$ gives
\begin{equation}
\label{eq:drift_feedback_sum}
V_T(f,\pi)
\;\le\;
\frac{L_p}{T}\sum_{t=1}^{T} d_F(\theta_{t+1},\theta_t)
=
\frac{L_p}{T}\,\mathcal A_T,
\end{equation}
where $\mathcal A_T:=\sum_{t=1}^T d_F(\theta_{t+1},\theta_t)$ is the cumulative
Fisher--Rao path length.

\paragraph{Invoke the information path bound.}
By Equation~\ref{eq:budget_relation}, the path length satisfies the pathwise bound
\[
\mathcal A_T
\;\le\;
C_T+\sum_{t=1}^T \epsilon_t,
\qquad
\epsilon_t\le c\|u_t\|^2.
\]
Substituting into \eqref{eq:drift_feedback_sum} yields
\[
V_T(f,\pi)
\;\le\;
\frac{L_p}{T}\Big(C_T+\sum_{t=1}^T\epsilon_t\Big)
\;\le\;
\frac{L_p}{T}\,C_T+\frac{L_pc}{T}\sum_{t=1}^T\|u_t\|^2,
\]
which proves the first two results.

\paragraph{Take expectations.}
Taking expectations of the previous inequality and using linearity of expectation gives
\[
\mathbb E\,V_T(f,\pi)
\;\le\;
\frac{L_p}{T}\,\mathbb E[C_T]
\;+\;
\frac{L_pc}{T}\sum_{t=1}^T \mathbb E\|u_t\|^2,
\]
which is the final result.
\end{proof}

\subsection{Proof of Theorem~\ref{thm:main_rep} (Prequential reproducibility bound)}
\label{app:proof_thm_main_rep}

\begin{proof}
By the add--subtract decomposition \eqref{eq:rep_decomp_setup}, we have the pathwise bound
\[
\Delta_T^{\mathrm{rep}}(f,\pi)
\;\le\;
\Delta_T^{\mathrm{sam}}(f,\pi)
\;+\;
V_T(f,\pi).
\]
Taking expectations gives
\begin{equation}
\label{eq:thm_main_rep_exp_split}
\mathbb E\,\Delta_T^{\mathrm{rep}}(f,\pi)
\;\le\;
\mathbb E\,\Delta_T^{\mathrm{sam}}(f,\pi)
\;+\;
\mathbb E\,V_T(f,\pi).
\end{equation}

\paragraph{Sampling term.}
Recall $\Delta_T^{\mathrm{sam}}(f,\pi)=\left|\frac1T\sum_{t=1}^T Z_t\right|$ with
$Z_t:=\ell(f_t(x_t),y_t)-R(\theta_t,f_t)$ (Section~\ref{sec:setup}).
Under the endogenous--drift model \eqref{eq:drift_model},
$(Z_t)_{t=1}^T$ is a martingale difference sequence with respect to the filtration
$\{\mathcal F_t\}$, and Assumption~\ref{assump:regularity}(4) gives the conditional
$\sigma$--sub-Gaussian mgf bound required by Lemma~\ref{lem:subgaussian_martingale}.
Therefore Corollary~\ref{cor:subgaussian_expectation} yields
\begin{equation}
\label{eq:thm_main_rep_sampling_bound_app}
\mathbb E\,\Delta_T^{\mathrm{sam}}(f,\pi)
\;=\;
\mathbb E\left|\frac1T\sum_{t=1}^T Z_t\right|
\;\le\;
\frac{\sqrt{2\pi}\,\sigma}{\sqrt{T}}.
\end{equation}

\paragraph{Drift term.}
By Lemma~\ref{lem:VT_budget} (proved in Appendix~\ref{app:proof_lem_VT_budget}),
\[
V_T(f,\pi)
\;\le\;
\frac{L_p}{T}\,C_T
\;+\;
\frac{L_p c}{T}\sum_{t=1}^T \|u_t\|^2.
\]
Taking expectations gives
\begin{equation}
\label{eq:thm_main_rep_drift_bound_app}
\mathbb E\,V_T(f,\pi)
\;\le\;
\frac{L_p}{T}\,\mathbb E[C_T]
\;+\;
\frac{L_p c}{T}\sum_{t=1}^T \mathbb E\|u_t\|^2.
\end{equation}

\paragraph{Combine.}
Substituting \eqref{eq:thm_main_rep_sampling_bound_app} and
\eqref{eq:thm_main_rep_drift_bound_app} into \eqref{eq:thm_main_rep_exp_split}
yields the stated bound.
\end{proof}

\subsection{Proof of Theorem~\ref{thm:lower_bound}
(Sharpness of the prequential reproducibility rate)}
\label{app:proof_sharpness}

\begin{proof}
We give an explicit one-dimensional construction. Let
\(\mathcal Z=\{-1,+1\}\), and consider the Rademacher-coded Bernoulli family
\[
P_\theta(Z=1)=\frac{1+\sin\theta}{2},
\qquad
P_\theta(Z=-1)=\frac{1-\sin\theta}{2},
\qquad
\theta\in[-r_0,r_0],
\]
where \(r_0>0\) is chosen small. Since
\(\mathbb E_\theta Z=\sin\theta\), the Fisher information in the coordinate
\(\theta\) is
\[
I(\theta)
=
\frac{(\partial_\theta \sin\theta)^2}{1-\sin^2\theta}
=
\frac{\cos^2\theta}{\cos^2\theta}
=
1.
\]
Thus \(\theta\) is a Fisher-arclength coordinate and
\[
d_F(\theta,\theta')=|\theta-\theta'|
\]
on the interval.

Let the hypothesis class contain two deterministic predictors indexed by
\(s\in\{-1,+1\}\), with bounded loss
\[
\ell_s(z)=\frac12+\frac{s z}{4}.
\]
Then \(\ell_s(z)\in[1/4,3/4]\), and the population risk is
\[
R(\theta,s)
=
\mathbb E_\theta \ell_s(Z)
=
\frac12+\frac{s}{4}\sin\theta .
\]

Consider deterministic controlled dynamics
\[
\theta_{t+1}=\theta_t+u_t,
\qquad
\eta_t\equiv0.
\]
Then \(d_t\equiv0\), \(J_uF\equiv1\), the second-order remainder vanishes, and
\[
\kappa_t^{(M)}=|u_t|,
\qquad
C_T=\alpha\sum_{t=1}^T |u_t|.
\]

Choose controls so that the path alternates between \(0\) and \(\delta\):
\[
u_t=
\begin{cases}
+\delta, & t \text{ odd},\\
-\delta, & t \text{ even},
\end{cases}
\qquad
\delta:=\frac{C}{\alpha T}.
\]
Taking \(C/T\le \alpha r_0\) keeps the trajectory inside \([0,\delta]\subseteq[-r_0,r_0]\), and
\[
C_T=\alpha\sum_{t=1}^T |u_t|=C.
\]
Deploy the deterministic predictor sequence
\[
s_t=\operatorname{sign}(u_t).
\]
This sequence is history-independent and hence \(\mathcal F_{t-1}\)-measurable. For every step,
\[
R(\theta_{t+1},s_t)-R(\theta_t,s_t)
=
\frac{s_t}{4}\{\sin\theta_{t+1}-\sin\theta_t\}
=
\frac14\left|\sin\theta_{t+1}-\sin\theta_t\right|.
\]
For \(r_0\) sufficiently small, \(\cos\theta\ge 1/2\) on \([-r_0,r_0]\), so by the mean value theorem,
\[
\left|\sin\theta_{t+1}-\sin\theta_t\right|
\ge
\frac12 |\theta_{t+1}-\theta_t|
=
\frac12 |u_t|.
\]
Therefore
\[
R(\theta_{t+1},s_t)-R(\theta_t,s_t)
\ge
\frac18 |u_t|.
\]
Averaging over time gives
\[
R_T^+-R_T^{\mathrm{traj}}
=
\frac1T\sum_{t=1}^T
\{R(\theta_{t+1},s_t)-R(\theta_t,s_t)\}
\ge
\frac{1}{8T}\sum_{t=1}^T |u_t|
=
\frac{C}{8\alpha T}.
\]

Since \(\widehat R_T\) is the empirical average of losses observed under
\(P_{\theta_t}\), while \(R_T^{\mathrm{traj}}\) is the corresponding same-time
population average,
\[
\mathbb E_P[\widehat R_T-R_T^{\mathrm{traj}}]=0.
\]
Hence
\[
\mathbb E_P[\widehat R_T-R_T^+]
=
-\left(R_T^+-R_T^{\mathrm{traj}}\right).
\]
By Jensen's inequality,
\[
\mathbb E_P\Delta_T^{\mathrm{rep}}
=
\mathbb E_P|\widehat R_T-R_T^+|
\ge
\left|\mathbb E_P[\widehat R_T-R_T^+]\right|
=
R_T^+-R_T^{\mathrm{traj}}
\ge
\frac{C}{8\alpha T}.
\]
This proves the \(C/T\) lower bound.

For the sampling term, restrict to the stationary subclass \(u_t\equiv0\),
\(\theta_t\equiv0\), and \(s_t\equiv+1\). Then \(C_T=0\),
\(R_T^+=R_T^{\mathrm{traj}}\), and
\[
\Delta_T^{\mathrm{rep}}
=
\left|\widehat R_T-R_T^{\mathrm{traj}}\right|
=
\frac14\left|\frac1T\sum_{t=1}^T Z_t\right|,
\]
where \(Z_t\) are iid Rademacher variables. By the Khintchine inequality \citep{khintchine1923},
there exists a universal constant \(c_{\mathrm K}>0\) such that
\[
\mathbb E\left|\sum_{t=1}^T Z_t\right|
\ge
c_{\mathrm K}\sqrt T.
\]
Therefore
\[
\mathbb E_P\Delta_T^{\mathrm{rep}}
\ge
\frac{c_{\mathrm K}}{4\sqrt T}.
\]

Taking \(\mathcal P_C^{\mathrm{hard}}\) to be the union of the drift subclass
and the stationary subclass gives
\[
\sup_{P\in\mathcal P_C^{\mathrm{hard}}}
\mathbb E_P\Delta_T^{\mathrm{rep}}
\ge
c\max\left(T^{-1/2},\frac{C}{T}\right),
\]
for a constant \(c>0\) depending only on fixed model constants and \(\alpha\).
The losses are bounded, so the conditional sub-Gaussian sampling condition
holds. Moreover,
\[
|\partial_\theta R(\theta,s)|=\frac14|\cos\theta|\le \frac14,
\]
and since \(d_F(\theta,\theta')=|\theta-\theta'|\), Assumption~\ref{assump:risk_lipschitz}
holds with \(L_p=1/4\).

Both subclasses satisfy the local regularity assumptions, remain in the compact
Fisher chart, and have vanishing second-order remainder. This proves the theorem.
\end{proof}

\subsection{Proof of Theorem~\ref{thm:indistinguishability} (Information-theoretic Indistinguishability)}
\label{app:proof_invisibility}

\begin{proof}
We follow standard minimax lower-bound arguments based on two-point and multiple-hypothesis testing reductions; see \citet[Ch.~2]{Tsybakov2009}.

We construct an explicit subclass $\mathcal P_C^{\mathrm{hide}}$ of drift--feedback processes satisfying Assumptions~\ref{assump:regularity}--\ref{assump:risk_lipschitz} and $\mathbb E_P[C_T]\le C$ on which estimating
\[
R_T^{+}(P):=\frac1T\sum_{t=1}^T R(\theta_{t+1},f_t),
\qquad
\mathcal L_t(\theta):=R(\theta,f_t),
\]
so that $R_T^{+}(P)=\frac1T\sum_{t=1}^T \mathcal L_t(\theta_{t+1})$ incurs minimax error $\gtrsim \max(T^{-1/2},C/T)$.

\paragraph{Local model and Fisher geometry.}
Work in a one-dimensional regular exponential family $\{p_\theta\}$ on the observation space (equivalently, take $\mathcal Y$ trivial and write $(x_t,y_t)\equiv x_t$):
\[
p_\theta(x)=\exp(\theta x-A(\theta))\,h(x),\qquad \theta\in\Theta\subset\mathbb R,
\]
restricted to a compact interval on which the Fisher information $I(\theta)=A''(\theta)$ is bounded above and below. This satisfies Assumption~\ref{assump:regularity}.
Introduce the Fisher arclength coordinate
\[
s(\theta):=\int_{\theta_0}^{\theta}\sqrt{I(u)}\,du,
\]
and restrict to a neighborhood of $\theta_0$ where $s$ is a diffeomorphism. In the $s$-coordinate the Fisher metric is identically $1$, hence locally
\[
d_F(\theta,\theta')=|s(\theta)-s(\theta')|.
\]
For convenience we work in arclength coordinates and write this coordinate again as $\theta$, so that $d_F(\theta,\theta')=|\theta-\theta'|$ on the neighborhood under consideration.

\paragraph{Deterministic controlled dynamics and budget.}
Consider deterministic controlled dynamics
\[
\theta_{t+1}=F(\theta_t,u_t,\eta_t)=\theta_t+u_t,\qquad \eta_t\equiv 0.
\]
Then $d_t\equiv 0$, $J_uF\equiv 1$, and the Taylor remainder vanishes. In arclength coordinates ($g\equiv 1$) we have $\kappa_t^{(\mathcal M)}=|u_t|$, and therefore
\[
C_T=\alpha\sum_{t=1}^T |u_t|.
\]

\paragraph{Block construction and codebook.}
Fix a step size $\delta>0$ and let $m\le \lfloor T/2\rfloor$. Partition the horizon into $m$ disjoint two-step blocks $B_j=\{2j-1,2j\}$, $j=1,\dots,m$, and set $u_t\equiv 0$ for $t>2m$.

Let $V\subset\{-1,+1\}^m$ be a binary code with minimum Hamming distance at least $\beta m$ and cardinality $|V|\ge 2^{\zeta m}$ for constants $\beta,\zeta>0$ \citep[Lemma~2.9]{Tsybakov2009}.
For each $v=(v_1,\dots,v_m)\in V$ define deterministic controls
\[
u_{2j-1}^{(v)}=v_j\delta,\qquad u_{2j}^{(v)}=-v_j\delta,\qquad u_t^{(v)}=0 \ (t>2m).
\]
Starting from $\theta_1=\theta_0$, the resulting trajectory satisfies
\[
\theta_{2j-1}^{(v)}=\theta_0,\qquad \theta_{2j}^{(v)}=\theta_0+v_j\delta,\qquad
\theta_{2j+1}^{(v)}=\theta_0,
\]
and $\theta_t^{(v)}=\theta_0$ for $t>2m$. Each block performs a geodesic excursion out-and-back of length $2\delta$, so
\[
C_T(P_v)=\alpha\sum_{t=1}^T |u_t^{(v)}|=2\alpha m\delta.
\]
Choose
\begin{equation}
\label{eq:choose_m_delta}
m:=\left\lfloor \frac{C}{2\alpha\delta}\right\rfloor\wedge \left\lfloor\frac{T}{2}\right\rfloor
\qquad\text{so that}\qquad C_T(P_v)\le C\ \ \text{for all }v\in V.
\end{equation}
(In this subclass $C_T$ is deterministic, hence $\mathbb E_{P_v}[C_T]\le C$.)

\paragraph{Separation in the one-step-ahead risk functional.}
Fix a learner sequence $(f_t)_{t=1}^T$ and loss/model pair such that the population risk has a nonzero local slope at $\theta_0$ on the indices $t=2j-1$ used below: there exists $c_0>0$ such that $|\partial_\theta \mathcal L_{2j-1}(\theta_0)|\ge c_0$ for all $j\le m$. Then for sufficiently small $\delta$,
\[
\mathcal L_{2j-1}(\theta_0+\delta)-\mathcal L_{2j-1}(\theta_0-\delta)
=2\delta\,\partial_\theta \mathcal L_{2j-1}(\theta_0)+O(\delta^2),
\]
so there exists $c_\ell>0$ such that whenever $v_j\neq w_j$,
\[
\bigl|\mathcal L_{2j-1}(\theta_{2j}^{(v)})-\mathcal L_{2j-1}(\theta_{2j}^{(w)})\bigr|
=
\bigl|\mathcal L_{2j-1}(\theta_0+v_j\delta)-\mathcal L_{2j-1}(\theta_0+w_j\delta)\bigr|
\ge c_\ell\,\delta.
\]
Because $v$ and $w$ differ in at least $\beta m$ blocks, and because $R_T^+(P_v)=\frac1T\sum_{t=1}^T \mathcal L_t(\theta_{t+1}^{(v)})$ depends on the excursions exactly at the indices $t=2j-1$ via $\theta_{t+1}^{(v)}=\theta_{2j}^{(v)}$, we obtain the packing separation
\begin{equation}
\label{eq:sep_RTplus}
|R_T^{+}(P_v)-R_T^{+}(P_w)|
\;\ge\;
\frac{1}{T}\sum_{j:\,v_j\neq w_j}
\bigl|\mathcal L_{2j-1}(\theta_{2j}^{(v)})-\mathcal L_{2j-1}(\theta_{2j}^{(w)})\bigr|
\;\ge\;
\frac{\beta m c_\ell \delta}{T}.
\end{equation}
With the choice \eqref{eq:choose_m_delta}, this lower bound is of order $(m\delta)/T\asymp C/T$.

\paragraph{KL divergence control.}
For each $v$, the state path $\{\theta_t^{(v)}\}$ is deterministic. Conditional on $\theta_t^{(v)}$, the observation at time $t$ is drawn from $p_{\theta_t^{(v)}}$, so the joint law factorizes and
\[
D_{\mathrm{KL}}(P_v\Vert P_w)
=
\sum_{t=1}^T D_{\mathrm{KL}}\!\bigl(p_{\theta_t^{(v)}}\Vert p_{\theta_t^{(w)}}\bigr).
\]
The trajectories differ only at times $t=2j$ where $v_j\neq w_j$. On a compact neighborhood, regular exponential families satisfy a uniform quadratic KL bound: there exists $C_I<\infty$ such that for sufficiently small $\delta$,
\[
D_{\mathrm{KL}}(p_{\theta_0+\delta}\Vert p_{\theta_0-\delta})\le C_I\,\delta^2.
\]
Hence
\begin{equation}
\label{eq:kl_bound}
D_{\mathrm{KL}}(P_v\Vert P_w)\le C_I\,m\,\delta^2.
\end{equation}

\paragraph{Fano lower bound for the $C/T$ term.}
Choose $\delta$ small enough (depending only on fixed model constants) so that the Fano condition holds, e.g. $C_I\,m\,\delta^2 \le \tfrac{1}{8}\log|V|$, using \eqref{eq:kl_bound} and $\log|V|\ge \zeta m$.
Applying the multi-hypothesis Fano inequality to the packing
$\{P_v\}_{v\in V}\subset\mathcal P_C^{\mathrm{hide}}$ and the separation \eqref{eq:sep_RTplus} yields
\[
\inf_{\widehat R_T}\sup_{P\in\mathcal P_C^{\mathrm{hide}}}\mathbb E_P\bigl|\widehat R_T-R_T^{+}(P)\bigr|
\;\gtrsim\;
\frac{m\delta}{T}
\;\asymp\;
\frac{C}{T}.
\]

\paragraph{The $T^{-1/2}$ term from a stationary subclass.}
To obtain the classical term, restrict further to a stationary subclass ($C_T\equiv 0$) where $\theta_t\equiv\theta$ is fixed over time. Then $R_T^{+}(P)=\frac1T\sum_{t=1}^T \mathcal L_t(\theta)$ is a one-dimensional smooth functional of $\theta$. A standard two-point Le Cam argument over $\theta\in\{\theta_0\pm \delta/\sqrt{T}\}$ (using the same local quadratic KL control) gives
\[
\inf_{\widehat R_T}\sup_{P:\,C_T\equiv 0}\mathbb E_P\bigl|\widehat R_T-R_T^{+}(P)\bigr|
\;\gtrsim\; T^{-1/2}.
\]

\paragraph{Combine regimes.}
Since the stationary subclass is contained in $\mathcal P_C^{\mathrm{hide}}$ for any $C\ge 0$, combining the two lower bounds yields
\[
\inf_{\widehat R_T}
\sup_{P\in\mathcal P_C^{\mathrm{hide}}}
\mathbb E_P\bigl|\widehat R_T-R_T^{+}(P)\bigr|
\;\gtrsim\;
\max\!\left(T^{-1/2},\,\frac{C}{T}\right),
\]
which proves the theorem (absorbing constants into $c_1$).
\end{proof}

\subsection{Proofs of Limiting Regimes
(Stationary and Classical Limits)}
\label{app:reductions}

\begin{lemma}[Stationary (iid) regime]
\label{lem:iid_limit}
If $d_t\equiv 0$ and $J_uF\equiv 0$ for all $t$, then $\theta_t=\theta_0$ for all $t$, so $p_{\theta_t}=p_{\theta_0}$ and the samples are drawn from a stationary law.
Moreover $V_T(f,\pi)=0$ and
\[
\mathbb E\,\Delta_T^{\mathrm{rep}}(f,\pi)
\;\le\;
\mathbb E\,\Delta_T^{\mathrm{sam}}(f,\pi)
\;\le\;
\frac{\sqrt{2\pi\,\mathcal V_T}}{T}
\;\le\;
\frac{\sqrt{2\pi}\,\sigma}{\sqrt{T}},
\]
under the standing $\sigma$--sub-Gaussian increment condition.
\end{lemma}

\begin{proof}
If $d_t\equiv 0$ and $J_uF\equiv 0$, then the update does not move the environment, hence $\theta_{t+1}=\theta_t$ and $\theta_t=\theta_0$ for all $t$. Therefore $R(\theta_{t+1},f_t)=R(\theta_t,f_t)$ for every $t$ and $V_T(f,\pi)=0$.
The decomposition in the main text gives
$\Delta_T^{\mathrm{rep}}(f,\pi)\le \Delta_T^{\mathrm{sam}}(f,\pi)+V_T(f,\pi)
=\Delta_T^{\mathrm{sam}}(f,\pi)$.
The sampling term is an average of martingale differences, so the stated bound follows from Corollary~\ref{cor:subgaussian_expectation}.
\end{proof}

\begin{lemma}[Exogenous-drift regime]
\label{lem:exo_limit}
If $J_uF\equiv 0$ (no policy-induced motion), then the drift penalty is controlled purely by the exogenous budget and
\[
V_T(f,\pi)\;\le\;\frac{L_p}{T}\,C_T
\qquad\text{and}\qquad
\mathbb E\,\Delta_T^{\mathrm{rep}}(f,\pi)
\;\le\;
\mathbb E\,\Delta_T^{\mathrm{sam}}(f,\pi)
+\frac{L_p}{T}\,\mathbb E[C_T].
\]
In particular, when $C_T=\sum_{t=1}^T d_t$ in this regime, this matches the classical ``variation budget'' penalty scaling $\sum_t d_t/T$.
\end{lemma}

\begin{proof}
With $J_uF\equiv 0$, the endogenous component vanishes and the controls do not contribute to motion. Lemma~\ref{lem:VT_budget} gives
$V_T(f,\pi)\le \frac{L_p}{T}\big(C_T+\sum_t\epsilon_t\big)$. Combine with $\Delta_T^{\mathrm{rep}}\le \Delta_T^{\mathrm{sam}}+V_T$ and take expectations.
\end{proof}

\begin{lemma}[Adaptive-data / purely endogenous regime]
\label{lem:ada_limit}
If $d_t\equiv 0$ (no exogenous drift), then all motion is policy-induced and
\[
V_T(f,\pi)
\;\le\;
\frac{L_p}{T}\,C_T+\frac{L_p c}{T}\sum_{t=1}^T \|u_t\|^2,
\]
hence
\[
\mathbb E\,\Delta_T^{\mathrm{rep}}(f,\pi)
\;\le\;
\mathbb E\,\Delta_T^{\mathrm{sam}}(f,\pi)
+\frac{L_p}{T}\,\mathbb E[C_T]+\frac{L_p c}{T}\sum_{t=1}^T \mathbb E\|u_t\|^2.
\]
\end{lemma}

\begin{proof}
This is immediate from Lemma~\ref{lem:VT_budget} after setting $d_t\equiv 0$, and then combining with $\Delta_T^{\mathrm{rep}}\le \Delta_T^{\mathrm{sam}}+V_T$ followed by expectations.
\end{proof}

\paragraph{Synthesis.}
In all regimes,
\[
\mathbb E\,\Delta_T^{\mathrm{rep}}(f,\pi)
\;\le\;
\mathbb E\,\Delta_T^{\mathrm{sam}}(f,\pi)
+
\mathbb E\,V_T(f,\pi)
\;=\;
O\!\left(T^{-1/2}+\frac{\mathbb E[C_T]}{T}\right)
\quad\text{(up to the control-energy remainder).}
\]
Thus $T^{-1/2}$ recovers the classical sampling rate when the environment is stationary, while $\mathbb E[C_T]/T$ quantifies the additional drift contribution induced by risk-relevant distributional motion.

\section{Experimental Details}
\label{app:experiment-details}

This appendix summarizes the experimental protocols used for the figures in Section~\ref{sec:experiments}. All experiments were implemented in Python~3.11 using \texttt{NumPy}, \texttt{SciPy}, and \texttt{PyTorch} for the nonlinear learner. Per-run integer seeds were used for reproducibility.

\paragraph{Recorded quantities and timing.}
Across experiments we record the learner trajectory $\{f_t\}$, the environment trajectory $\{\theta_t\}$ when available, the empirical trajectory loss
\[
\hat R_T=\frac1T\sum_{t=1}^T \ell(f_t(x_t),y_t),
\]
and the corresponding same-time population risk
\[
R_T=\frac1T\sum_{t=1}^T R(\theta_t,f_t).
\]
When drift is present we also record the prequential target
\[
R_T^+=\frac1T\sum_{t=1}^T R(\theta_{t+1},f_t),
\qquad
\Delta_T^{\mathrm{rep}}=|\hat R_T-R_T^+|.
\]
The drift penalty is
\[
V_T=\frac1T\sum_{t=1}^T
\bigl|R(\theta_{t+1},f_t)-R(\theta_t,f_t)\bigr|.
\]
All predictors and controls are constructed to be $\mathcal F_{t-1}$-measurable. In particular, $f_t$ is fixed when comparing $R(\theta_t,f_t)$ and $R(\theta_{t+1},f_t)$, matching the decomposition
$\Delta_T^{\mathrm{rep}}\leq \Delta_T^{\mathrm{sam}}+V_T$.

\subsection{Linear--Gaussian experiment}
\label{app:exp-gaussian}

The linear--Gaussian experiment uses a Gaussian mean-estimation model with environment state $\theta_t\in\mathbb R^d$,
\[
x_t\sim \mathcal N(\theta_t,\Sigma),
\qquad
\ell(f,x)=\|x-f\|_2^2.
\]
The deployed predictor is the online mean $f_t=\hat\mu_{t-1}$. Population risk is available in closed form:
\[
R(\theta,\mu)=\mathrm{tr}(\Sigma)+\|\theta-\mu\|_2^2.
\]
Thus $\hat R_T$, $R_T$, $R_T^+$, $\Delta_T^{\mathrm{sam}}$, $\Delta_T^{\mathrm{rep}}$, and $V_T$ are computed directly.

For the budget-collapse experiment in Figure~3(a), we use $d=5$, $\Sigma=I_d$, $T=2000$, and 12 seeds. Exogenous increments are generated by drawing random directions and rescaling to prescribed Fisher length. The endogenous component is history-measurable, with $u_t=-k\hat\mu_{t-1}$, $k=0.25$, and feedback strength $\gamma$. We sweep
\[
C_{\mathrm{exo}}\in\{0,2,4,8,16,32\},
\qquad
\gamma\in\{0,0.01,0.02,0.04,0.08\}.
\]
We log
\[
\bar d=\frac1T\sum_t \|v_t^{\mathrm{exo}}\|_{\Sigma^{-1}},
\qquad
\bar\kappa=\frac1T\sum_t \|\gamma u_t\|_{\Sigma^{-1}},
\]
and fit $\alpha_\star$ so that $\bar d+\alpha_\star\bar\kappa$ organizes $V_T$.

For the horizon-scaling experiment in Figure~3(b), we use a reflecting-walk construction to keep $\theta_t$ in a compact region while maintaining approximately constant per-step Fisher motion $\delta_F$. We evaluate
\[
T\in\{200,400,800,1600,3200,6400,12800\},
\qquad
\delta_F\in\{0,0.1,0.2\},
\]
with 40 seeds.

\subsection{Nonlinear teacher--learner experiment}
\label{app:exp-nonlinear}

The nonlinear experiment uses a teacher--learner model with inputs
$x_t\sim\mathcal N(0,I_d)$, $d=5$, and labels
\[
y_t=\phi(x_t)^\top\theta_t+\sigma\xi_t,
\qquad
\xi_t\sim\mathcal N(0,1),
\qquad
\sigma=0.1.
\]
The feature map $\phi:\mathbb R^d\to\mathbb R^m$ has $m=64$ and is fixed throughout the experiment. The learner is a two-layer MLP with hidden width 128, trained online by single-sample SGD on squared loss.

Fisher motion in teacher space is approximated using a fixed Fisher--Gram matrix
\[
G \approx \frac{1}{N\sigma^2}\Phi(X_p)^\top\Phi(X_p),
\]
with a small ridge term for numerical stability. Exogenous drift is applied in random Fisher-normalized directions with total budget $C_{\mathrm{exo}}$. Policy-sensitive drift is applied along a normalized teacher--student disagreement direction, and the resulting Fisher length is logged as $\kappa_t^{(\mathcal M)}$.

Population MSE is estimated on fresh Monte Carlo batches:
\[
R(\theta,f)
=
\mathbb E[(f(x)-\phi(x)^\top\theta)^2]+\sigma^2.
\]
The same evaluation schedule is used for $\hat R_T$ and $R_T^+$, and paired risks $R(\theta_t,f_t)$ and $R(\theta_{t+1},f_t)$ are evaluated with the same covariate batch when computing $V_T$.

We sweep
\[
T\in\{800,1600,3200,6400\},
\qquad
C_{\mathrm{exo}}/T\in\{2.5,5,10,20,40,80\}\times 10^{-3},
\]
\[
\gamma\in\{0,0.0025,0.005,0.01,0.02\},
\]
using 12 seeds. The additivity plane is fit on all horizons except $T_{\mathrm{hold}}=6400$:
\[
\Delta_T^{\mathrm{rep}}
\approx
b_0+b_sT^{-1/2}
+b_1\bar d+b_2\bar\kappa,
\]
and the combined rate uses $\alpha_\star=b_2/b_1$.

\subsection{Closed-form observable Fisher contraction}
\label{app:exp-observable-fisher}

The closed-form observable-channel experiment uses the Gaussian location family
\[
P_t=\mathcal N(\theta_t,\Sigma),
\qquad
\Sigma=I_d,\quad d=5.
\]
Intrinsic Fisher--Rao step lengths are
\[
d_F(P_{t+1},P_t)=\|\theta_{t+1}-\theta_t\|_{\Sigma^{-1}}.
\]
The monitoring channel is a fixed linear Gaussian kernel
\[
o=Bx+\xi,
\qquad
\xi\sim\mathcal N(0,\sigma_K^2I_k),
\]
so the pushed-forward law is
\[
Q_t=K_\#P_t
=
\mathcal N(B\theta_t,\;B\Sigma B^\top+\sigma_K^2I_k),
\]
and the induced Fisher step is
\[
d_F(Q_{t+1},Q_t)
=
\|B(\theta_{t+1}-\theta_t)\|_{(B\Sigma B^\top+\sigma_K^2I_k)^{-1}}.
\]
Figure~5 reports both the local-rate comparison and the stepwise contraction check across three fixed choices of $(k,\sigma_K)$. To visualize rate changes, the same total exogenous budget is allocated in alternating high- and low-speed intervals, and local rates are shown using trailing-window averages.

\subsection{Misspecified feedback experiment}
\label{app:exp-misspecified-feedback}

We use the public \emph{US Regional Sales Data} dataset from Kaggle~\citep{talhabu_us_regional_sales}. Unit price is the prediction target, with order quantity, unit cost, and sales channel used in the feedback and monitoring construction. A ridge predictor is fit before feedback begins.

Each round uses the timing
\[
    f_t\ \text{deployed on}\ D_t,
    \qquad
    D_t\mapsto D_{t+1},
    \qquad
    R_t(f_t),R_{t+1}(f_t)\ \text{recorded},
    \qquad
    f_t\mapsto f_{t+1}.
\]
The same fixed predictor is therefore evaluated before and after the population transition.

The feedback perturbation is driven by centered deployed predictions. If
\[
    r_t(x)=\hat y_t(x)-|D_t|^{-1}\sum_{x'\in D_t}\hat y_t(x'),
\]
then the feedback response uses a bounded nonlinear score
\[
    h_t(x)=\tanh\!\left(r_t(x)/s_t\right),
\]
where $s_t$ is the empirical scale of the centered predictions. The perturbation changes unit price, order quantity, and unit cost, with sales-channel subgroup entering the heterogeneous response. Row-level noise and seeded round-level perturbations are included. Feedback strength is swept over
\[
    \mu\in\{0,0.05,0.10,0.20\},
\]
with multiple seeds per setting.

For each transition we compute
\[
    v_t=\bigl|R_{t+1}(f_t)-R_t(f_t)\bigr|,
    \qquad
    V_T=\frac1T\sum_t v_t,
\]
and $\Delta_T^{\mathrm{rep}}$ using held-out empirical loss and the one-step-ahead finite-population target.

For each fixed categorical channel $K$, rows are mapped to discrete states by binning selected coordinates, yielding histograms $\hat q_t^{(K)}$ on a finite simplex. The categorical Fisher--Rao step is
\[
    d_F(\hat q_{t+1}^{(K)},\hat q_t^{(K)})
    =
    2\arccos\!\left(
        \sum_j
        \sqrt{\hat q_{t+1,j}^{(K)}\hat q_{t,j}^{(K)}}
    \right),
\]
with a small pseudocount to avoid empty bins, and the observed rate is
\[
    \mathcal A_T^{(K)}/T
    =
    \frac1T\sum_t
    d_F(\hat q_{t+1}^{(K)},\hat q_t^{(K)}).
\]

The channels are: null blind, a fixed row-bucket control; coarse score, which bins only the deployed prediction score; task-aligned, which bins prediction score, order quantity, unit cost, and sales-channel subgroup; and three ablations omitting quantity, cost, or subgroup.

Run-level diagnostics fit
\[
    Y_T=\beta_0+\beta_1 \mathcal A_T^{(K)}/T+\varepsilon,
    \qquad
    Y_T\in\{V_T,\Delta_T^{\mathrm{rep}}\},
\]
separately for each channel. Transition-level diagnostics report Spearman association between
\[
    d_F(\hat q_{t+1}^{(K)},\hat q_t^{(K)})
    \quad\text{and}\quad
    v_t
\]
over positive-feedback transitions. Raw categorical rates are not calibrated estimates of $C_T/T$, $V_T$, or run-level degradation.

\section{Additional Experimental Results}
\label{app:exp-results}

\begin{figure}[H]
  \centering
  \includegraphics[width=0.6\linewidth]{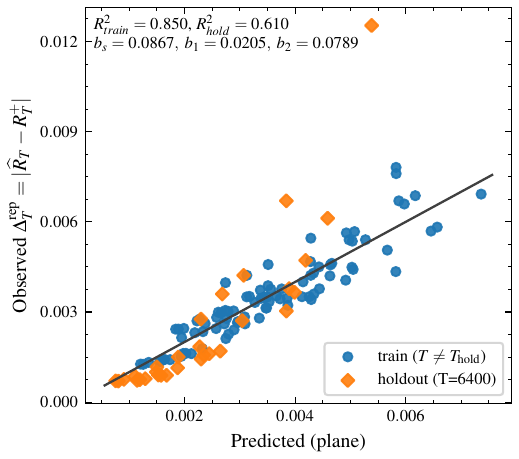}
  \caption{
  Neural-network plane-fit diagnostic. Observed $\Delta_T^{\mathrm{rep}}$ versus
  predictions from the additivity plane trained on $T\neq T_{\mathrm{hold}}$, with
  $T_{\mathrm{hold}}=6400$ highlighted.
  }
  \label{fig:nn_obs_pred}
\end{figure}

\bibliographystyle{plainnat} 
\bibliography{main}

\end{document}